\theoremstyle{plain}
\newtheorem{theorem}{Theorem}[section]
\newtheorem{proposition}[theorem]{Proposition}
\newtheorem{lemma}[theorem]{Lemma}
\theoremstyle{definition}
\newtheorem{criterion}[theorem]{Criterion}
\theoremstyle{remark}
\title{When Does Group Invariant Learning Survive Spurious Correlations?}
\author{
  Yimeng Chen$^{1,2}$\thanks{Work done during Yimeng Chen's internship at AIR, Tsinghua University.}, Ruibin Xiong$^{3}$,
  Zhiming Ma$^{1,2}$,
  Yanyan Lan$^{4,5}$\thanks{Corresponding author.} \and 
    \textsuperscript{1}Academy of Mathematics and Systems Science, Chinese Academy of Sciences \\
    \textsuperscript{2}University of Chinese Academy of Sciences \textsuperscript{3}Baidu Inc. \\
    \textsuperscript{4}Institute for AI Industry Research, Tsinghua University \\
    \textsuperscript{5}Beijing Academy of Artificial Intelligence, Beijing, China \\
  \texttt{chenyimeng14@mails.ucas.ac.cn,xiongruibin@baidu.com}, \\
  \texttt{mazm@amt.ac.cn,lanyanyan@tsinghua.edu.cn}
}
\begin{document}

\maketitle

\begin{abstract}
By inferring latent groups in the training data, recent works introduce invariant learning to the case where environment annotations are unavailable. Typically, learning group invariance under a majority/minority split is empirically shown to be effective in improving out-of-distribution generalization on many datasets. However, theoretical guarantee for these methods on learning invariant mechanisms is lacking. In this paper, we reveal the insufficiency of existing group invariant learning methods in preventing classifiers from depending on spurious correlations in the training set. Specifically, we propose two criteria on judging such sufficiency. Theoretically and empirically, we show that existing methods can violate both criteria and thus fail in generalizing to spurious correlation shifts. Motivated by this, we design a new group invariant learning method, which constructs groups with statistical independence tests, and reweights samples by group label proportion to meet the criteria. Experiments on both synthetic and real data demonstrate that the new method significantly outperforms existing group invariant learning methods in generalizing to spurious correlation shifts.

\end{abstract}

\section{Introduction}
In many real-world applications, machine learning models inevitably encounter data that are rarely presented in the training environment, i.e. being \emph{out-of-distribution} (OOD). For example, data collected under new weather~\citep{volk2019towards}, locations~\citep{beery2018recognition}, or light conditions~\citep{dai2018dark} in vision tasks. However,  machine learning models often fail in generalizing to OOD data, which blocks their deployment to critical applications~\citep{gulrajani2021search,wiles2021fine}. The dependence on spurious correlations that are prone to change across environments has been recognized as a major cause of such failure~\citep{arjovsky2019invariant,gulrajani2021search,wald2021calibration}. For example, it has been shown that models trained on MNLI~\citep{williams2018broad} usually classify sentence pairs with high word overlap as the label ‘entailment’, regardless of their semantics~\citep{mccoy2019right}. On a new dataset where such relation no longer holds, the performance drops over 25\%~\citep{mccoy2019right,clark2019don}. 

A notable line of research on improving the robustness of models to distribution shifts is learning features with invariant conditioned label distribution across training environments~\citep{peters2016causal,arjovsky2019invariant,krueger2021out}, which has been termed \emph{invariant learning} (IL). These methods are based on the assumption that the causal mechanism keeps invariant across environments~\citep{peters2016causal}, while the spurious correlation varies. By penalizing the variance of model prediction across environments, models are then encouraged to capture the causal mechanism instead of spurious correlations. 

Recently, invariant learning has been introduced to the scenario where environment labels are unknown~\citep{ teney2021unshuffling,creager21environment}, which we term the \emph{group invariant learning} (group-IL). These methods utilize prior knowledge of spurious correlations to split the training data into groups. For example, \citet{teney2021unshuffling} cluster training samples with their predefined spurious features. A more generic method, EIIL~\citep{creager21environment}, splits training data into the majority/minority sets on which the spurious feature conditioned label distribution varies maximally. Similar to a priori environments, these groups are supposed to encode variations of spurious information, while holding the causal mechanism. 

Though some performance improvements have been gained on several datasets, much uncertainty still exists on the effectiveness of group-IL methods. In particular, \emph{when these methods can effectively address spurious correlations} remain a question. Though some theoretical analysis on the success and failure cases of IL with known environments has been proposed~\citep{ahuja2021invariance,rosenfeld2020risks,lu2022invariant,ahuja2021empirical}, they are not sufficient for group-IL. 
First, inferred groups may not meet the assumptions on environments in existing theoretical analysis, thus their conclusions cannot generalize to group-IL.
For example, in each environment, the spurious feature is assumed to have a Gaussian type distribution in~\citep{rosenfeld2020risks}. However, the inferred group may not satisfy that condition, for example each group may only contain one unique value of spurious feature. 
Second, as environments are known and defined with causal structures, exiting analysis on the effect of environment on IL focus mostly on their number~\citep{rosenfeld2020risks,ahuja2021empirical} but less on their property or validity. However, the later are important for group-IL. Therefore, we need specific theoretical analysis under the setting of group-IL.

In this study, we discuss necessary conditions for group-IL to survive spurious correlations.
For this purpose, we first formalize the setting of group-IL and clarify the necessary assumptions required for group-IL, which underlie our theoretical analysis. We then propose two criteria for group-IL, namely \emph{falsity exposure criterion} and \emph{label balance criterion}. They are respectively for judging whether spurious correlations are sufficiently exposed through their variation across groups and whether group invariance can reach spurious-free conditions. Based on that, we discuss the success and failure cases of existing methods.
In some synthetic benchmarks (e.g. colored-MNIST in~\citep{arjovsky2019invariant,creager21environment}), the majority/minority groups meet the two criteria. 
However, in a case when the spurious feature is multivariate, the majority/minority split violates both criteria according to our theoretical analysis and observations on real datasets. As a result, existing group-IL methods are insufficient for solving spurious correlations. 

To fix these problems, we propose a new group-IL method guided by the two criteria. Specifically, this method contains the following two steps to meet the two criteria. First, groups are defined by stratifying the prediction of a reference predictor which encodes spurious correlations. The strata is constructed with statistical tests, such that the spurious prediction is independent of the label on each group. Second, the label proportion of each inferred group is balanced by attaching weights to each instance within the group. Models are then trained with invariant learning objectives on the defined groups. We term this method Spurious-Correlation-Strata Invariant Learning with Label-balancing, abbreviated as SCILL. We further show that SCILL is provably sufficient in reaching spurious-free with ideal reference models.
\par

To demonstrate the effectiveness of our proposed strategy, we conduct experiments on both synthetic and real data benchmarks on spurious correlations shifts in image classification and natural language inference (NLI). Specifically, we adopt two different invariant learning objectives, IRM (IRMv1)~\citep{arjovsky2019invariant} and REx (V-REx)~\citep{krueger2021out}, to show the consistency of SCILL. To show the availability of SCILL, we also experiment with PGI~\citep{ahmed2021systematic} and cMMD~\citep{li2018deep,ahmed2021systematic}, which are feature invariance targets used with EIIL~\citep{creager21environment} in~\citep{ahmed2021systematic}.
The experimental results show that SCILL with all the four invariance objectives consistently outperforms the existing state-of-the-art method EIIL in generalizing to spurious correlation shifts. Ablation study further shows the effectiveness of each component in SCILL.

Our main contributions can be summarized as follows.
\begin{itemize}
    \item We propose two criteria for group-IL, and theoretically show the insufficiency of existing methods.
    \item Guided by the two criteria, we propose a new practical group-IL method which is provably sufficient in solving spurious correlations.
    \item Extensive experiments on both synthetic and real-data benchmarks show that SCILL significantly outperforms existing methods on image classification and NLI tasks.
\end{itemize}

\section{Related works}
\label{sec:relatedw}

\paragraph{Combating spurious correlations.}
A typical kind of distribution-shift is caused by the shift of spurious correlations~\citep{wiles2021fine}, which are correlations between meaningless features (e.g. hospital tokens on a lung scan) and labels (e.g. has pneumonia or not) in the training set. The existence of spurious correlations in popular benchmarks have been revealed by many works~\citep{gururangan2018annotation,poliak2018hypothesis,mccoy2019right,singla2022salient}. For example, predictive models with only incomplete semantic inputs or syntactic statistics can achieve high accuracy in NLI benchmarks~\citep{gururangan2018annotation,poliak2018hypothesis}. \citet{geirhos2020shortcut} point out that deep neural networks are prone to take easy-to-fit spurious correlations, i.e. \emph{shortcut} strategies, in solving problems. As a result, resolving model's dependence on spurious correlation is important for their robustness to distribution shifts.\par
\paragraph{Invariant learning (IL).} Many works on domain generalization focus on capturing invariances across training environments~\citep{gulrajani2021search}. Recently a new kind of strategy which has made significant impacts is to learn features that permit an invariant predictor across environments~\citep{peters2016causal,arjovsky2019invariant,krueger2021out,wald2021calibration}, termed invariant learning in this paper. Such strategy is grounded
upon the theory of causality~\citep{pearl2009causality}, where Structural Equation Models~\citep{peters2016causal} or causal graphs~\citep{wald2021calibration} are used to describe assumptions on the data generation process. The feature-conditioned label distribution invariance is then induced by the invariance of causal mechanisms in different environments.

\paragraph{Group invariant learning.} In recent works, IL is extended to the scenario without a priori environment labels, but with knowledge on spurious correlations in the training data~\citep{ teney2021unshuffling,creager21environment}. Such knowledge is proven to be necessary in this setting~\citep{lin2022zin}. They are utilized to split the training data into groups, which are supposed to encode variations of spurious information so that they can be avoided by learning the invariance. For example, \citet{teney2021unshuffling} cluster samples according to their question types. \citet{liu21hrm} construct groups with varying spurious correlations, based on the spurious features uncovered with feature selection.
A more generic method proposed in EIIL~\citep{creager21environment} assume the access to a reference predictor which encodes spurious correlations, and exploit the outputs of the reference predictor to split training data into two groups, namely the majority and the minority. This strategy is shown effective, sometimes even outperform an oracle method using true environment labels in proving the OOD generalization~\citep{creager21environment} and also systematic generalization~\citep{ahmed2021systematic}.

\section{Formalization of group invariant learning}
\label{sec:review}

In this section, we introduce the problem setting and formalization of the scheme of group-IL.

Consider the task of learning a classifier $f: \mathcal{X} \rightarrow \mathcal{Y}$, which maps a value $x \in \mathcal{X}$ of the input variable $X$ to a value $y \in \mathcal{Y}$ of the target variable $Y$. For example, map an image of horse on grass to the label `horse'. We denote $x_{inv}$ as the essential features of an instance $x$ of the input variable $X$ which define its class label (e.g. the shape of the horse), while $x_{sp}$ as features of $x$ should not inform the label of $x$ (e.g. the grass background). $X_{inv}$, $X_{sp}$ denote the corresponding random variables.
The target is then to learn a \emph{spurious-free} predictor whose predictions only depend on the feature $x_{inv}$, thus is supposed to have invariant performance on any dataset.

\begin{figure}[!t]
  \begin{center}
    \subfigure[anti-causal~\citep{arjovsky2019invariant}]{
    \includegraphics[scale=0.38]{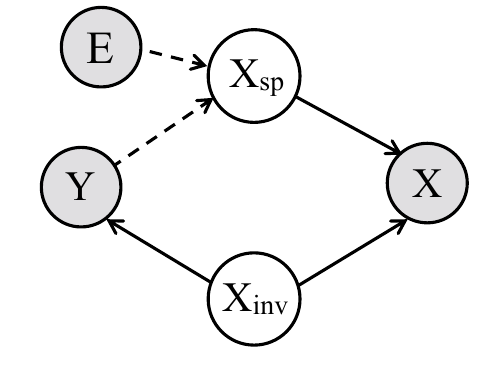}
    }
    \subfigure[anti-causal~\citep{rosenfeld2020risks}]{
    \includegraphics[scale=0.38]{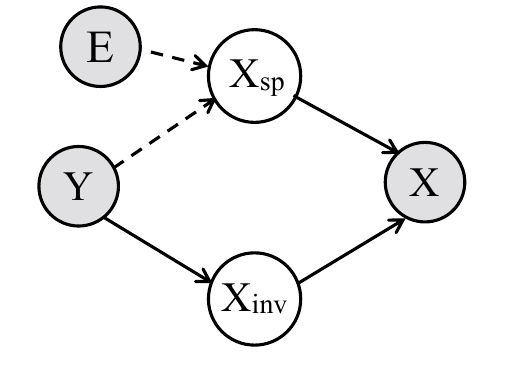}
    }
    \subfigure[confounded~\citep{ahuja2021invariance,mahajan2021domain}]{
    \includegraphics[scale=0.38]{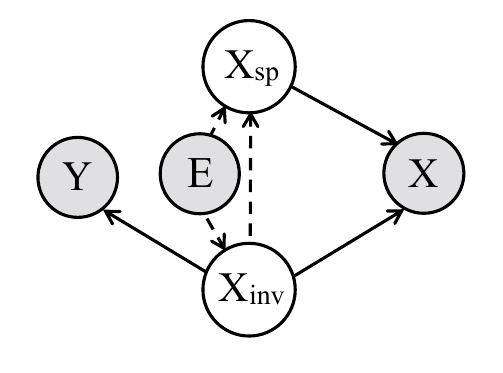}
    }
    \subfigure[hybrid~\citep{wald2021calibration,lu2022invariant}]{
    \includegraphics[scale=0.38]{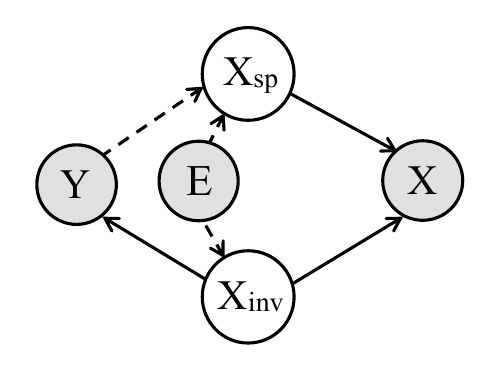}
    }
    \caption{The causal graph depicting different assumptions on the data generating process in existing IL works (some are simplified). Shading indicates the variable is observed. Dotted arrow indicates possible causal relation. The spurious feature is anti-causal in (a) and (b), confounded with the invariant feature in (c), and both anti-causal and confounded in (d).}
    \label{fig:cgraph}
  \end{center}
\end{figure}

We first introduce the setting of IL. IL methods suppose that the training data $\mathcal{D}$ are collected under multiple environments $\mathcal{E}$, i.e. $\mathcal{D} = \{D_e\}_{e\in \mathcal{E}}$. $D_e=\left\{x_{i}^{e}, y_{i}^{e}\right\}_{i=1}^{n^{e}}$ contains data \emph{i.i.d.} sampled from the probability distribution $\mathbb{P}^e(\mathcal{X} \times \mathcal{Y})$. $\mathbb{P}^e(Y|X_{inv})$ is invariant among different $e$, while $\mathbb{P}^e(Y|X_{sp})$ varies. Such an idea is based on the invariance of causal mechanisms across environments~\citep{peters2016causal,arjovsky2019invariant}, with assumptions on the causal structure of the data generating process. Figure~\ref{fig:cgraph} demonstrates four kinds of different assumptions in existing works on invariant learning. In these causal structures, environment is treated as a random variable $E$ take values in $\mathcal{E}_{all}$, which satisfies $\mathcal{E} \subset \mathcal{E}_{all}$. $\mathbb{P}^e(\cdot) := \mathbb{P}(\cdot|E=e)$. $X_{inv}$ and $X_{sp}$ are latent feature variables generating the observation $X$, i.e. $X = r(X_{inv}, X_{sp})$. Here $r$ is generally assumed as a bijective function so that the latent features can be recovered from the observations~\citep{rosenfeld2020risks,ahuja2021invariance,wald2021calibration}. Align with the formalization in the former paragraph where $X_{sp}, X_{inv}$ are assumed recognizable from $X$, in this paper we adopt the same assumption.
In all the four kinds of causal graphs, $\mathbb{P}^e(Y|X_{inv}) := \mathbb{P}(Y|X_{inv}, E=e)$ keeps invariant under different $e \in \mathcal{E}_{all}$, while $\mathbb{P}^e(Y|X_{sp})$, $\mathbb{P}^e(X_{sp})$, $\mathbb{P}^e(X_{inv})$, and $\mathbb{P}^e(X_{inv}|Y)$ can vary across different $e \in \mathcal{E}_{all}$.

Suppose the predictor $f$ can be decomposed into $f=c \circ \Phi$, where $\Phi: \mathcal{X} \rightarrow \mathcal{H}$ denotes a feature encoder which maps the input into a representation space $\mathcal{H}$, $c: \mathcal{H} \rightarrow \mathcal{Y}$ is a classifier. The target of invariant learning is then to search for a $\Phi$ which satisfies the following constraint:
\begin{equation}
    \mathbb{P}(Y|\Phi(X), E=e) = \mathbb{P}(Y|\Phi(X), E=e'), \forall e,e' \in \mathcal{E}. \tag{EIC}
\end{equation}
It is termed as \emph{Environment Invariance Constraint} (EIC). Note that the EIC stated in~\citep{creager21environment} is a weaker form of this EIC. The constraint is incorporated into the training target via a penalty term. In a generic form, the learning target of invariant learning methods can be written as follows:
\begin{equation}
    \min _{f} \sum_{e \in \mathcal{E}} \lambda_e \mathcal{R}^{e}(f) + \lambda \cdot \textit{penalty}(\{S_e(f)\}_{e \in \mathcal{E}})
    \label{eq:eiloss}
\end{equation}
where $\mathcal{R}^{e}(f)$ stands for the expected loss of $f$ on the environment $e$, weighted by a scalar $\lambda_e$. $S_e(f)$ stands for some statistics of $f$ on $e$, and the \emph{penalty} is on the variation of $S_e(f)$ to measure the deviation degree of EIC. In IRM~\citep{arjovsky2019invariant}, the penalty is the summation of $S_e(f)=\lVert \nabla_w \mathcal{R}^e(w \circ f) \rVert^2$, where $w$ is a constant scalar multiplier of 1.0 for each output dimension. In V-REx~\citep{krueger2021out}, $S_e(f)=\mathcal{R}^e(f)$, and the penalty is the variance of $S_e(f)$ on different environments. In CLOvE~\citep{wald2021calibration}, the penalty is defined as the summation of calibration errors of the model on each environment.\par

Group invariant learning methods release the dependency of invariant learning on predefined environments by splitting the training data into groups. Suppose $\mathcal{D}$ is sampled from the distribution $\mathbb{P}(\mathcal{X}\times \mathcal{Y})$. Intuitively, those groups are expected to hold the invariant mechanism $\mathbb{P}(Y|X_{inv})$, while informing the variation of $X_{sp}$. As a result, it is meaningless to divide groups according to $X_{inv}$. In group-IL methods~\citep{teney2021unshuffling,liu21hrm,creager21environment}, group inference algorithms are designed to utilize knowledge on $X_{sp}$ or the correlation between $X_{sp}$ and $Y$. Formally, denote the inferred groups as $\mathcal{G} := \{g_1, g_2, \dots, g_m\}$.
Define $G: \mathcal{X}\times \mathcal{Y} \rightarrow \mathcal{I}$ as the function which maps a sample to its group identity, i.e. $G(x, y) = i$ if and only if $(x, y) \in g_i$. $\mathcal{G}$ is then the set of events $\{G=i\}, i \in \mathcal{I}$. We have $G$ is $\sigma(X_{sp}, Y)$-measurable, i.e. it is a function of $X_{sp}$, as in~\citep{teney2021unshuffling,liu21hrm}, or both $X_{sp}$ and $Y$~\citep{creager21environment}.\par

In the following sections, we ground our analysis on group-IL with the causal structures in Figure~\ref{fig:cgraph} (a) and (b), while without additional assumptions on the causal models. Our choice of causal structures is based on the following two observations. First, in the causal graph (d), $X_{sp}$ is a backdoor variable~\citep{pearl2009causality} between $X_{inv}$ and $Y$ and confounded with $X_{inv}$ by an unobserved variable. As a result, whether the invariant mechanism holds on each group is indeterminate without additional assumptions on the mechanisms between $X_{sp}$ and $X_{inv}$.
Second, as shown by~\citet{ahuja2021invariance}, invariance itself can not deal with spurious feature for causal structure (c). Additional knowledge or penalty, e.g. information bottleneck, is needed together with group-IL. Therefore, we investigate group-IL under the causal structures depicted by graphs (a) and (b).

\section{Two group criteria}
\label{sec:two}

With the above formulation, we are ready to theoretically analyze the ability of existing group-IL methods in learning spurious-free predictors.
For this purpose, in this section we derive two necessary conditions for group-IL in surviving spurious correlations, i.e.~falsity exposure, and label balance.  Both conditions can be used as criteria to judge the sufficiency of group-IL methods. We then show that existing methods can violate the two criteria, thus become insufficient for learning a spurious-free predictor.

\subsection{Falsity exposure criterion}
As groups are supposed to expose variance of spurious features so that they can be avoided by invariant learning, a natural idea is to take into account the sufficiency of such exposure on inferred groups. Ideally, if groups are split according to 
$X_{sp}$, any variance of $\mathbb{P}(Y|X_{sp})$ is then fully exposed. However, such split is only practical when the value of $X_{sp}$ is accessible and sparse. On the contrary, we consider the condition when groups provide insufficient exposure. Intuitively, if some spurious correlation keeps invariant across groups, its variation is then not exposed, thus group invariant predictor may still depend on such correlation. Formally, this can be written as the following criterion.

\begin{criterion}[Falsity Exposure]
For any $\sigma(X_{sp})$-measurable function $h$ that satisfies $\forall g, g' \in \mathcal{G}$, $\mathbb{P}(Y|h(X_{sp}), g) = \mathbb{P}(Y|h(X_{sp}), g')$, it must satisfies $\mathbb{P}(Y|h(X_{sp})) = \mathbb{P}(Y)$.
\end{criterion}

Intuitively, if the falsity exposure criterion is not satisfied, $h(X_{sp})$ will be an invariant feature across environments, with predictive ability on $Y$. The predictor depending on both $X_{inv}$ and $h(X_{sp})$ can satisfy EIC but fails to be free of spurious features. The following theorem formalizes the significance of the falsity exposure criterion.

\begin{theorem}
\label{thm:fal}
Suppose the falsity exposure criterion is violated, i.e. $\exists h$ which satisfies $\mathbb{P}(Y|h(X_{sp}), g) = \mathbb{P}(Y|h(X_{sp}), g')\! \neq \! \mathbb{P}(Y), \forall g, g'\!\! \in \! \mathcal{G}$. Then the optimal solution of group-IL is $f(X)\!=\!\mathbb{P}[Y|X_{inv},h(X_{sp})]$, which fails to generalize when $\mathbb{P}(Y|X_{sp})$ shifts.
\end{theorem}

\subsection{Label balance criterion}
Even if we have sufficient falsity exposure, would the constraint in group-IL, i.e. EIC, guarantee the model to be free of spurious correlations?
We study this problem by analyzing the relation between EIC and the constraint for a predictor to be spurious-free.
In our assumed causal structures, $X_{inv} \perp\!\!\!\!\perp X_{sp} | Y$. Thus, a spurious-free predictor, which only depends on $X_{inv}$, satisfies $f(X) \perp\!\!\!\!\perp X_{sp} | Y$. In fact, it can be proved that this is a sufficient condition for a predictor $f(X)$ to be invariant to the intervention~\citep{pearl2009causality} on $X_{sp}$ (See the appendix). As a result, we term $f(X) \perp\!\!\!\!\perp X_{sp} | Y$ as the \emph{spurious-free constraint}, i.e.
\begin{equation}
    \mathbb{P}(f(X)|X_{sp}=b, Y) = \mathbb{P}(f(X)|X_{sp}=b', Y), \forall b,b' \in \mathcal{B}, \tag{SFC}
\end{equation}

where $\mathcal{B}$ is the image set of $X_{sp}$. The following is a necessary condition for EIC to induce the above constraint, which we term as the \emph{label balance} criterion. It states that the label proportion in different groups should be the same. 

\begin{criterion}[Label Balance]
For any $g,g' \in \mathcal{G}$ and $y, y' \in \mathcal{Y}$ with non-zero $\mathbb{P}(Y=y|g), \mathbb{P}(Y=y'|g), \mathbb{P}(Y=y|g')$ and $\mathbb{P}(Y=y'|g')$, the following equation holds.
\begin{equation}
    \mathbb{P}(Y=y|g) / \mathbb{P}(Y=y'|g)= \mathbb{P}(Y=y|g') / \mathbb{P}(Y=y'|g')
\end{equation}
\end{criterion}


Formally, the following theorem shows the significance of this criterion on the effectiveness of group-IL.

\begin{theorem}
\label{thm:label}
With a set of groups $\mathcal{G}$ inferred by $(X_{sp},Y)$, i.e. $\mathcal{G} \subset \sigma(X_{sp}, Y)$, if the label balance criterion is violated, functions satisfying EIC can not satisfy SFC.
\end{theorem}

\subsection{Analysis of existing group invariant learning methods}
\label{subsec:analysis}

Now we analyze whether the groups in existing group-IL methods meet the two criteria. Directly, randomly grouped clusters of $X_{sp}$ as in~\citep{teney2021unshuffling} do not guarantee to meet either criteria, and clusters of $\mathbb{P}(Y|X_{sp})$~\citep{liu21hrm} do not meet the label balance criterion. In the following discussions, we focus on the majority/minority groups inferred by the EI algorithm in EIIL~\citep{creager21environment}. We find that in some synthetic datasets where the spurious feature only has two distinct values, the majority/minority groups satisfy the two criteria. However, observation on real datasets and theoretical results on the case when the spurious feature is multivariate show that they can violate both criteria.

On some synthetic datasets constructed in existing works~\citep{ahmed2021systematic,creager21environment}, the majority/minority groups satisfy these two criteria. For example, on both colored-MNIST~\citep{arjovsky2019invariant} and coloured-MNIST~\citep{ahmed2021systematic}, $Y$ has a uniform distribution, and the spurious correlation has the same ratio for any spurious features, e.g. $\mathbb{P}(Y=0|\text{color}=\textit{green})=\mathbb{P}(Y=1|\text{color}=\textit{red})$ on colored-MNIST. It can be proved that in this case, the majority/minority groups satisfy both criteria (See the appendix).

However, it no longer holds in general cases. Empirically, we observe that on MNLI, the label distributions of the two groups inferred by EI are significantly different. Specifically, the ratio of the counts of label 0 and label 1 in the two groups are 2.87 and 0.17 respectively.
The following proposition theoretically provides a case where the majority/minority split satisfies the label balance but breaks the falsity exposure, and invariant learning objectives fail to find the spurious-free classifier.

\begin{proposition}
\label{prop:foei}
Suppose we have $(X, Y) \sim \mathbb{P}(X, Y)$. $Y$ takes value in $\{0, 1\}$. $X$ is formed with spurious feature variable $X_{sp}=(B_0, B_1)$, and invariant feature variable $S$, i.e. $X = 
r(B_0, B_1, S)$, for some bijective function $r$. $B_0$ and $B_1$ are both binary variables, which take values in $\{b_0^0, b_0^1\}$ and $\{b_1^0, b_1^1\}$ respectively. $B_0$, $B_1$ and $S$ are conditionally independent given $Y$. Suppose $\mathbb{P}(Y=j|B_i=b_i^j) = p_i,$ $\forall i, j \in \{0,1\}$, and $p_0 > p_1$. Then we have
1) the majority/minority groups $e_{mar}, e_{min}$ violate the falsity exposure criterion.
2) the optimal classifier under invariant learning objectives depends on $B_1$.
\end{proposition}

In this case, we suppose the spurious feature can be decomposed into two variables that are conditionally independent with each other given the label. Such case can realize when the dataset contains multiple kinds of spurious features. For example, in the image classification task, the background pattern and the color of an object can be independent but both correlate with the label spuriously. \par

\section{SCILL: a new method}
\label{sec:model}
According to the above discussion in Section~\ref{sec:two}, existing group-IL methods may fail to meet the two criteria, leading to insufficient training loss for solving spurious correlations. Faced with this challenge, we propose a new group-IL method, to satisfy the two criteria. For the generality, we only assume the access to a reference model, as in EIIL, instead of spurious features. Our new method includes two steps. 
In the group inference step, we define groups as spurious correlation strata constructed with the reference model, for the falsity exposure criterion. While in the training objective step, we reweight each sample with group label proportion to meet the label balance criterion. To highlight the two parts, our method is named as Spurious-Correlation-strata Invariant Learning with Label-balance, abbreviated as SCILL. Theoretically, we prove that this method is provably sufficient with ideal reference models, i.e. its optimal solution is spurious-free. \par

\subsection{Group inference with statistical split}
\label{subsec:strata}
We first introduce the group inference step in SCILL. As in EIIL, we assume a reference classifier $f_r$ is available, which is expected to predict only depending on $X_{sp}$. $f_r$ can be a model trained with empirical risk minimization (ERM)~\citep{creager21environment} or a model designed to capture spurious correlations~\citep{liu2020hyponli}.

We propose to construct groups $\mathcal{G}$ by stratify the outputs of $f_r$, with the target that $Y \perp\!\!\!\!\perp f_r(X) | g$, $\forall g \in \mathcal{G}$. The motivation for introducing spurious correlation strata comes from the deduction of the falsity exposure criterion, i.e.~when groups meet the requirements that $X_{sp} \perp\!\!\!\!\perp Y | g$, the falsity exposure criterion is satisfied. That is because with the above condition, we have for any function $h$, $h(X_{sp}) \perp\!\!\!\!\perp Y | g$, as a result $\mathbb{P}(Y|h(X_{sp}), g)=\mathbb{P}(Y|g)$. If $\mathbb{P}(Y|h(X_{sp}), g) = \mathbb{P}(Y|h(X_{sp}), g')$, $\forall g, g' \in \mathcal{G}$, we have $\mathbb{P}(Y|g)=\mathbb{P}(Y|g')=\mathbb{P}(Y)$. Thus $\mathbb{P}(Y|h(X_{sp}), g)=\mathbb{P}(Y)$. As a result, the falsity exposure criterion is satisfied. As different output values of the reference model $f_r$ inform the difference in $\mathbb{P}(Y|X_{sp})$, $Y \perp\!\!\!\!\perp f_r(X) | g$ approximates $\mathbb{P}(Y|X_{sp}) \perp\!\!\!\!\perp Y | g$, which is equivalent to $X_{sp} \perp\!\!\!\!\perp Y | g$.

We propose to construct such groups through the \emph{statistical-split} algorithm, inspired by the algorithm proposed in~\cite{imbens2015causal} for propensity score estimation. Specifically, we split current groups into subgroups according to the hypothesis-test statistics. For example, for the binary classification case, a sample set $B$ is first divided into two subsets $L_0$, $L_1$ according to the labels of samples. Then the two-sample t-statistics $t_B$ of $\log[f_r(x)_0/f_r(x)_1]$ are computed on the two sets. If $t_B$ exceeds a fixed threshold $thr$, $B$ is then split into two subsets according to the median of $f_r(x)_0$ on $B$. In this way, we enhance $f_r(X) \perp\!\!\!\!\perp Y$ in each group. Note that $thr$ is a hyperparameter of this algorithm. Empirical robustness analysis on $thr$ is conducted in our experiments. More details about the algorithm and the robustness study are provided in the appendix.

\subsection{Training with reweighted loss}
Now we introduce the second step. To guarantee the label balance criterion, we reweight each sample with group label proportion in the training loss. Correspondingly, the objective of invariant learning becomes the following form:
\begin{align}
    \mathcal{L}(f) := \sum_{g\in \mathcal{G}} \tilde{\mathcal{R}}^g(f) + \lambda \cdot \textit{penalty}(\{S_g(f)\}_{g \in \mathcal{G}})
\label{eq:loss}
\end{align}
where $\tilde{\mathcal{R}}^g(f) = \mathbb{E} [ w^g(Y) \mathcal{L}^g (f(X), Y)]$, $\omega^g(y) := \mathbb{P}(Y=y)/\mathbb{P}(Y=y|g)$ for nonzero $\mathbb{P}(Y=y|g)$. The weight function is defined to balance the label distribution between groups, i.e.~$\mathbb{P}(Y|g) = \mathbb{P}(Y|g'), \forall g,g'$, for achieving the label balance criterion.

With the above two steps, SCILL is then able to meet the two group criteria. Now we further investigate the theoretical capability of SCILL in solving spurious correlations. The following theorem shows that with a purely spurious reference model, SCILL can find spurious-free predictors.
\begin{theorem}
\label{thm:bias-modeling}
If $\mathcal{G}$ satisfies $f^*_r(X) \perp\!\!\!\!\perp Y|g$, $\forall g \in \mathcal{G}$, where $f^*_r: \mathcal{X} \rightarrow \mathcal{Y}$ is spurious-only, i.e. $\sigma(X_{sp})$-measurable, and minimizes the prediction loss $\mathcal{L}^r_{ce} = \mathbb{E}[ \sum_{y}\mathbb{P}(Y=y|X)\log f_r(X)_y]$, the optimal model minimizing the objective (\ref{eq:loss}) satisfies SFC.
\end{theorem}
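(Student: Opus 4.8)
The plan is to show that the SCILL construction forces both group criteria of Section~\ref{sec:two} to hold in the reweighted training distribution, and then that under these two criteria the minimizer of the reweighted invariant-learning objective coincides with the spurious-free predictor $\mathbb{P}(Y\mid X_{inv})$, which in particular satisfies SFC.

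First I would pin down what the ideal reference model is: since $\mathcal{L}^r_{ce}$ is (the negative of) a strictly proper scoring rule over functions of $X_{sp}$, its minimizer is the Bayes posterior $f^*_r(x)=\mathbb{P}(Y\mid X_{sp}=x_{sp})$, and this posterior is a sufficient statistic for $Y$ given $X_{sp}$, i.e. $Y\perp\!\!\!\!\perp X_{sp}\mid f^*_r(X)$. Next, because the statistical-split algorithm assigns the group label as a function of $(f^*_r(X),Y)$ alone, $G$ is $(f^*_r(X),Y)$-measurable; combining this with the sufficiency just noted and with the hypothesis $f^*_r(X)\perp\!\!\!\!\perp Y\mid g$ yields $X_{sp}\perp\!\!\!\!\perp Y\mid g$ for every $g$ (a short computation: condition on $f^*_r(X)$ inside each group and use $\mathbb{P}(X_{sp}\mid Y,f^*_r,g)=\mathbb{P}(X_{sp}\mid f^*_r)$ together with $Y\perp\!\!\!\!\perp f^*_r\mid g$). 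From $X_{sp}\perp\!\!\!\!\perp Y\mid g$ the falsity exposure criterion follows exactly by the deduction already given in Section~\ref{subsec:strata}. For label balance I would check directly that the weights $\omega^g(y)=\mathbb{P}(Y=y)/\mathbb{P}(Y=y\mid g)$ make the reweighted measure $\tilde{\mathbb{P}}$ satisfy $\tilde{\mathbb{P}}(Y=y\mid g)=\mathbb{P}(Y=y)$ for all $g$, and that, being within-group and conditional on $Y$, the reweighting leaves intact both $X_{sp}\perp\!\!\!\!\perp Y\mid g$ and the structural relation $X_{inv}\perp\!\!\!\!\perp X_{sp}\mid Y$ (as well as the marginals $\mathbb{P}(Y)$ and $\mathbb{P}(X_{inv}\mid Y)$), so that both criteria hold under $\tilde{\mathbb{P}}$.

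With both criteria in force under $\tilde{\mathbb{P}}$, I would argue that $f^*:=\mathbb{P}(Y\mid X_{inv})$ is optimal for (\ref{eq:loss}). Two ingredients: (i) $f^*$ makes the penalty vanish --- using label balance together with $X_{inv}\perp\!\!\!\!\perp g\mid Y$ (itself a consequence of $X_{inv}\perp\!\!\!\!\perp X_{sp}\mid Y$ and $g=G(X_{sp},Y)$), a Bayes-rule computation gives $\tilde{\mathbb{P}}(Y\mid X_{inv},g)=\mathbb{P}(Y\mid X_{inv})$ independent of $g$, which implies EIC, hence zero V-REx/PGI/cMMD penalty and also zero IRMv1 gradient penalty since $f^*$ is self-calibrated within each group; (ii) among predictors satisfying EIC on $\mathcal{G}$, $f^*$ uniquely minimizes the reweighted risk $\sum_g\tilde{R}^g$. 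For (ii) I would use falsity exposure to rule out genuine dependence of an EIC predictor on $X_{sp}$: within a group one has $Y\perp\!\!\!\!\perp X_{sp}\mid X_{inv},g$ (from $X_{sp}\perp\!\!\!\!\perp Y\mid g$ and $X_{inv}\perp\!\!\!\!\perp X_{sp}\mid Y$), so any component of $f$ that varies with $X_{sp}$ at fixed $X_{inv}$ is either incompatible with EIC or $Y$-irrelevant; strict propriety of the loss then forces the optimum to be the $X_{inv}$-measurable Bayes predictor $f^*$. Finally, since $f^*$ is a function of $X_{inv}$ and $X_{inv}\perp\!\!\!\!\perp X_{sp}\mid Y$, it satisfies SFC, which completes the argument.

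The main obstacle I anticipate is step (ii): cleanly showing that EIC plus falsity exposure confines the optimum to $X_{inv}$-measurable predictors. The delicate points are that EIC constrains $\mathbb{P}(Y\mid f(X),g)$ rather than $f$ directly --- so one must handle predictors that smuggle in $X_{sp}$ in a way invisible to the EIC test --- and that the invariant-learning penalties are surrogates (especially IRMv1), so the equivalence ``penalty $=0$ iff EIC'' is really a population-level idealization that should be stated as such. A subsidiary point requiring care is confirming that the statistical-split construction indeed yields $G$ as a function of $(f^*_r(X),Y)$, which is exactly the operative meaning of ``ideal reference model'' that makes the reduction $f^*_r\perp\!\!\!\!\perp Y\mid g\ \Rightarrow\ X_{sp}\perp\!\!\!\!\perp Y\mid g$ go through.
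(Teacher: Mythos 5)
Your first half matches the paper's route: the ideal reference model is the Bayes posterior $\mathbb{P}(Y\mid X_{sp})$, the strata give $X_{sp}\perp\!\!\!\!\perp Y\mid g$ (hence falsity exposure), and the weights $\omega^g$ enforce label balance. The genuine gap is your step (ii), which you yourself flag but do not resolve. You want to conclude that once both criteria hold, EIC plus strict propriety confines the optimum to $X_{inv}$-measurable predictors. But the two criteria are established in Section~\ref{sec:two} only as \emph{necessary} conditions (both theorems there have the form ``if the criterion fails, EIC cannot yield SFC''); neither the paper nor your argument shows they are jointly \emph{sufficient}. Falsity exposure only excludes group-invariant spurious predictors $h(X_{sp})$ with residual predictive power; it does not by itself exclude a predictor $f(X_{inv},X_{sp})$ whose $X_{sp}$-dependence is entangled with $X_{inv}$ in a way that still passes the EIC test on $\mathbb{P}(Y\mid f(X),g)$. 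Your sentence ``any component of $f$ that varies with $X_{sp}$ at fixed $X_{inv}$ is either incompatible with EIC or $Y$-irrelevant'' is exactly the claim that needs a proof, and it is not supplied.

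The paper closes this gap by a different mechanism. It computes the reweighted within-group risk explicitly and exploits the specific structure of the strata: each group is a level set of $f^*_r$, so the spurious joint $\mathbb{P}(X_{sp}=b,Y=y)$ is constant over the $b$'s appearing in a group and $\mathbb{P}(X_{sp}\mid g)$ is uniform. With the weights $w^g_y=1/\mathbb{P}(Y=y\mid g)$ this collapses the within-group risk to $\sum_y\sum_{b,s}\mathbb{P}(X_{inv}=s\mid y)\log f_y(x(b,s))$ with $b$ entering only through a uniform average, so the risk term itself already forces the optimizer to satisfy $f(X)\perp\!\!\!\!\perp X_{sp}\mid g$. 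The conclusion then follows from Lemma~\ref{lemma:eic}, whose hypotheses are label balance \emph{plus} the within-group uniformity of the spurious joint \emph{plus} $f(X)\perp\!\!\!\!\perp X_{sp}\mid g$ --- conditions strictly stronger than your two criteria, delivered by the strata construction and the risk computation rather than by falsity exposure. To complete your proof along your own lines you would either have to prove your step (ii) directly (which amounts to re-deriving something like Lemma~\ref{lemma:eic}) or import the paper's explicit risk decomposition.
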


\section{Experiments}
\label{sec:exp}
In this section, we first conduct experiments to show that SCILL outperforms existing group-IL methods in generalizing to spurious correlation shifts. Then we empirically analyze whether the experimental improvements are consistent with the theoretical findings. Code is availiable at \url{https://github.com/Beastlyprime/group-invariant-learning}.

\subsection{Experimental settings}
\label{subsec:set}
Now we describe our experimental settings, including datasets, models, and some training details. More details are provided in the appendix.

\paragraph{Datasets} We conduct experiments on both synthetic and real-world datasets. The synthetic dataset, Patched-Colored MNIST (PC-MNIST), is constructed as a realization of the conditions in the Proposition~\ref{prop:foei} to verify the proposed criteria. It is derived from MNIST, by assigning two conditionally independent spurious features given label, namely the color and patch bias to each image. The design of the patch bias is inspired by~\citep{bae2022blood}. MNLI-HANS is a benchmark widely used in many previous works on combating spurious correlations, such as~\citep{clark2019don,utama2020mind}. In our experiments, we follow the practice to utilize MNLI~\citep{williams2018broad} as the training data and HANS~\citep{mccoy2019right} as the test data.

\begin{table*}[t]
\caption{Classification accuracy on PC-MNIST under three model selection strategies ID, Oracle, TEV. Val columns contain the accuracy values computed on the in-distribution validation set, and Test columns contain those on the test set. As the label noise rate is set to $0.25$ on PC-MNIST, the optimum predictor depending on invariant features achieves an accuracy around 75\% on both sets.}
\label{tab:pcmnist}
\setlength\tabcolsep{5.2 pt}
\begin{center}
\begin{small}
\begin{tabular}{c|c|cc|cc|cc}
\toprule
{\multirow{2}{*}{\textbf{Method}}} & {\multirow{2}{*}{\textbf{Penalty}}} & \multicolumn{2}{c|}{\textbf{ID}} & \multicolumn{2}{c|}{\textbf{Oracle}} & 
    \multicolumn{2}{c}{\textbf{TEV}} \\ 
    & & \textbf{Val}&\textbf{Test}&\textbf{Val}&\textbf{Test}&\textbf{Val}&\textbf{Test} \\
\midrule
ERM & -  & 90.22 {\scriptsize $\pm$ 0.56}& 50.64 {\scriptsize $\pm$ 0.56} & 89.95 {\scriptsize $\pm$ 0.45} & 54.53 {\scriptsize $\pm$ 0.60 } & - & - \\
\midrule
& IRM & 90.21 {\scriptsize $\pm$ 0.48 }& 50.63 {\scriptsize $\pm$ 0.45 }& 78.01 {\scriptsize $\pm$ 0.45 }& 63.63 {\scriptsize $\pm$ 0.71 }& 69.81 {\scriptsize $\pm$ 0.27 }& 50.99 {\scriptsize $\pm$ 0.58} \\
EIIL & REx & 90.24 {\scriptsize $\pm$ 0.45 }& 51.21 {\scriptsize $\pm$ 0.64 }& 79.10 {\scriptsize $\pm$ 0.43 }& 64.04 {\scriptsize $\pm$ 0.80 }& 70.05 {\scriptsize $\pm$ 0.23 }& 51.01 {\scriptsize $\pm$ 0.68} \\
& cMMD & 90.24 {\scriptsize  $\pm$ 0.43 }& 51.36 {\scriptsize  $\pm$ 0.61 }& 77.27 {\scriptsize  $\pm$ 0.28 }& 65.09 {\scriptsize  $\pm$ 0.63 }& 70.15 {\scriptsize  $\pm$ 0.25 }& 52.70 {\scriptsize  $\pm$ 1.40 }\\
& PGI & 90.19 {\scriptsize $\pm$ 0.46 }& 51.07 {\scriptsize $\pm$ 0.54 }& 80.03 {\scriptsize $\pm$ 1.41 }& 64.27 {\scriptsize $\pm$ 0.26 }& 70.37 {\scriptsize $\pm$ 0.14 }& 50.64 {\scriptsize $\pm$ 0.38} \\
\midrule
& IRM & 79.65 {\scriptsize $\pm$ 0.76 }& 62.49 {\scriptsize $\pm$ 0.55 }& 71.54 {\scriptsize $\pm$ 0.35 }& 67.46 {\scriptsize $\pm$ 0.19 }& 71.54 {\scriptsize $\pm$ 0.35 }& 67.46 {\scriptsize $\pm$ 0.19} \\
SCILL & REx & 80.23 {\scriptsize $\pm$ 0.83 }& 62.13 {\scriptsize $\pm$ 0.99 }& 72.59 {\scriptsize $\pm$ 1.44 }& \textbf{67.60} {\scriptsize $\pm$ 0.24 }& 70.77 {\scriptsize $\pm$ 0.50 }& 67.33 {\scriptsize $\pm$ 0.30} \\
& cMMD & 83.13 {\scriptsize $\pm$ 0.93}& 59.76 {\scriptsize $\pm$ 0.92 }& 73.12 {\scriptsize $\pm$ 0.47 }& 67.49 {\scriptsize $\pm$ 0.52 }& 72.38 {\scriptsize $\pm$ 0.51 }& \textbf{67.81} {\scriptsize $\pm$ 0.34}\\
& PGI & 80.67 {\scriptsize $\pm$ 1.75 }& \textbf{62.52} {\scriptsize $\pm$ 0.32 }& 71.73 {\scriptsize $\pm$ 1.43 }& 67.26 {\scriptsize $\pm$ 0.14 }& 71.35 {\scriptsize $\pm$ 0.24 }& 67.36 {\scriptsize $\pm$ 0.33} \\
\midrule
\midrule
SCILL{\scriptsize uw} & IRM & 90.27 {\scriptsize $\pm$ 0.39 }& 50.95 {\scriptsize $\pm$ 0.47 }& 90.07 {\scriptsize $\pm$ 0.34 }& 53.51 {\scriptsize $\pm$ 1.38 }& 90.28 {\scriptsize $\pm$ 0.39 }& 50.85 {\scriptsize $\pm$ 0.47} \\
\midrule
maj./min. &  & 90.18 {\scriptsize $\pm$ 0.26} & 50.67 {\scriptsize $\pm$ 0.15} & 80.10 {\scriptsize $\pm$ 0.21} & 63.85 {\scriptsize $\pm$ 0.58} & 90.18 {\scriptsize $\pm$ 0.26} & 50.67 {\scriptsize $\pm$ 0.15} \\
SCILL{\scriptsize gt} & IRM & 82.55 {\scriptsize $\pm$ 0.28} & 61.12 {\scriptsize $\pm$ 1.17} & 74.46 {\scriptsize $\pm$ 0.25} & 70.19 {\scriptsize $\pm$ 0.39} & 72.30 {\scriptsize $\pm$ 0.40} & 70.91 {\scriptsize $\pm$ 0.06} \\
SCILL{\scriptsize ub} &  & 84.37 {\scriptsize $\pm$ 0.53} & 58.78 {\scriptsize $\pm$ 0.41} & 79.27 {\scriptsize $\pm$ 2.95} & 59.44 {\scriptsize $\pm$ 0.44} & 66.07 {\scriptsize $\pm$ 0.73} & 56.20 {\scriptsize $\pm$ 0.57} \\
\midrule
opt & - & 75 & 75 & 75 & 75 & 75 & 75 \\
\bottomrule
\end{tabular}
\end{small}
\end{center}
\vskip -0.1in
\end{table*}

\begin{table*}[t]
\caption{Classification accuracy on HANS.}
\label{tab:hans}
\setlength\tabcolsep{5.2 pt}
\begin{center}
\begin{small}
\begin{tabular}{c|c|cc|cc|cc}
\toprule
{\multirow{2}{*}{\textbf{Method}}} & {\multirow{2}{*}{\textbf{Penalty}}} & \multicolumn{2}{c|}{\textbf{ID}} & \multicolumn{2}{c|}{\textbf{Oracle}} & 
    \multicolumn{2}{c}{\textbf{TEV}} \\ 
    & & \textbf{Val}&\textbf{Test}&\textbf{Val}&\textbf{Test}&\textbf{Val}&\textbf{Test} \\
\midrule
ERM   & - & 84.12 {\scriptsize $\pm$ 0.15 }& 64.88 {\scriptsize $\pm$ 3.00 }& 84.12 {\scriptsize $\pm$ 0.15 }& 64.88 {\scriptsize $\pm$ 3.00} & - & -\\
\midrule
& IRM  & 84.01 {\scriptsize $\pm$ 0.08 }& 65.35 {\scriptsize $\pm$ 0.93 }& 83.82 {\scriptsize $\pm$ 0.17 }& 66.42 {\scriptsize $\pm$ 0.98 }& 84.01 {\scriptsize $\pm$ 0.08 }& 65.35 {\scriptsize $\pm$ 0.93} \\
EIIL & REx & 84.10 {\scriptsize $\pm$ 0.13  }& 65.16 {\scriptsize $\pm$ 0.19 }& 83.91 {\scriptsize $\pm$ 0.20 }& 66.87 {\scriptsize $\pm$ 2.92 }& 84.00 {\scriptsize $\pm$ 0.48 }& 66.43 {\scriptsize $\pm$ 1.00} \\
& cMMD & 83.56 {\scriptsize $\pm$ 0.03  }& 63.22 {\scriptsize $\pm$ 1.76  }& 83.22 {\scriptsize $\pm$ 0.13  }& 64.25 {\scriptsize $\pm$ 1.63 }& 83.38 {\scriptsize $\pm$ 0.20  }& 62.72 {\scriptsize $\pm$ 2.03  }\\
& PGI & 84.17 {\scriptsize $\pm$ 0.08  }& 65.57 {\scriptsize $\pm$ 2.25 }& 83.78 {\scriptsize $\pm$ 0.03 }& 66.02 {\scriptsize $\pm$ 0.93 }& 83.94 {\scriptsize $\pm$ 0.64 }& 65.57 {\scriptsize $\pm$ 2.25} \\
\midrule
& IRM & 82.75 {\scriptsize $\pm$ 0.17 }& 69.11 {\scriptsize $\pm$ 1.76 }& 82.56 {\scriptsize $\pm$ 0.33 }& 68.72 {\scriptsize $\pm$ 1.24 }& 82.67 {\scriptsize $\pm$ 0.14 }& 69.82 {\scriptsize $\pm$ 1.29} \\ 
SCILL & REx & 82.68 {\scriptsize $\pm$ 0.28 }& \textbf{69.73} {\scriptsize $\pm$ 1.63 }& 82.59 {\scriptsize $\pm$ 0.22 }& \textbf{71.20} {\scriptsize $\pm$ 1.81 }& 82.56 {\scriptsize $\pm$ 0.33 }& 69.75 {\scriptsize $\pm$ 1.53} \\
& cMMD & 82.74 {\scriptsize $\pm$ 0.26  }& 69.15 {\scriptsize $\pm$ 1.39  }& 82.39 {\scriptsize $\pm$ 0.45  }& 70.77 {\scriptsize $\pm$ 1.40  }& 82.61 {\scriptsize $\pm$ 0.04  }& \textbf{70.92} {\scriptsize $\pm$ 0.79 }\\
& PGI  & 82.79 {\scriptsize $\pm$ 0.30 }& 68.57 {\scriptsize $\pm$ 0.54 }& 81.69 {\scriptsize $\pm$ 0.28 }& 70.99 {\scriptsize $\pm$ 0.48 }& 82.79 {\scriptsize $\pm$ 0.30 }& 68.57 {\scriptsize $\pm$ 0.54} \\
\midrule
\midrule
EIIL{\scriptsize lb} & IRM & 83.39 {\scriptsize $\pm$ 0.06  }& 63.90 {\scriptsize $\pm$ 1.16 }& 83.39 {\scriptsize $\pm$ 0.06 }& 63.90 {\scriptsize $\pm$ 1.16 }& 83.16 {\scriptsize $\pm$ 0.22 }& 61.33 {\scriptsize $\pm$ 0.33} \\
SCILL{\scriptsize uw} & IRM & 84.15 {\scriptsize $\pm$ 0.11 }& 64.30 {\scriptsize $\pm$ 0.67 }& 83.77 {\scriptsize $\pm$ 0.15 }& 65.93 {\scriptsize $\pm$ 0.12 }& 83.84 {\scriptsize $\pm$ 0.02 }& 65.63 {\scriptsize $\pm$ 1.46} \\
\bottomrule
\end{tabular}
\end{small}
\end{center}
\vskip -0.1in
\end{table*}

\paragraph{Baselines and configurations}
In our experiments, we compare SCILL with two baselines, i.e.~ERM and EIIL~\citep{creager21environment}. ERM represents the method with the traditional empirical risk minimization (ERM) approach. EIIL is a state-of-the-art group-IL method, where groups are inferred by searching an assignment to make the reference model maximally violates the invariant learning principle. We experiment with four different invariance penalties: IRM~\citep{arjovsky2019invariant}, REx (V-REx)~\citep{krueger2021out}, cMMD~\citep{li2018deep,ahmed2021systematic} and PGI~\citep{ahmed2021systematic}. Note that cMMD and PGI target to learn group invariant predictions conditioning on the label, different from EIC. See Appendix for more details of the four penalties.\par 

The training configurations are presented as follows. For PC-MNIST, we adopt the classifier proposed  in~\citep{arjovsky2019invariant} for Colored MNIST, which is a MLP with two hidden layers of 390 neurons. The reference model is a MLP with the same structure trained with ERM on the training set, following the setting in EIIL on Colored MNIST. While for MNLI, we use a BERT-based classifier with the standard setup for sentence pair classification~\citep{devlin2019bert}. The reference model is the same as the biased classifier propose in~\citep{utama2020mind}, which is trained on top of some hand-crafted syntactic features. For each task, all implementations of SCILL and EIIL adopt the same model configurations and pretrained reference models. Since models are tested with OOD data, it is important to specify the model selection strategy, as has been revealed by~\citet{gulrajani2021search} for the case of domain generalization. In our experiments, we report results with 3 different model selection strategies, including ID, Oracle, and TEV. ID refers to the strategy based on model performance on the in-distribution validation set as used in~\citep{utama2020mind}. Oracle refers to the selection based on data from the test data distribution, as used in~\citep{creager21environment,gulrajani2021search}. While TEV is a new strategy adapted from the training-domain validation method in~\citep{gulrajani2021search} to the inferred groups, which alleviates the dependence on the test data as ID. 
Details can be found in the appendix.

\subsection{Experimental results}
Now we demonstrate our experimental results, including performance comparison and detailed analysis. More empirical results including the robustness analysis on the hyperparameter in SCILL can be found in the appendix.

\subsubsection{Performance comparison}
Table~\ref{tab:pcmnist} and \ref{tab:hans} show the experimental results on PC-MNIST and MNLI-HANS, respectively. The main observation is that all implementations of SCILL consistently outperform the counterpart of EIIL across all model selection strategies, in terms of the performance on OOD data. Comparing different model selection strategies, Oracle performs the best for both EIIL and SCILL. However, SCILL with the TEV strategy has the ability to outperform EIIL with Oracle, demonstrating the superiority of our new objective. Additionally, SCILL also gains improvements against some debiasing methods utilizing the same reference model (See the appendix).

\subsubsection{Ablation study and verification of the two criteria}

Our main theoretical results in Section~\ref{sec:two}, i.e. Theorem~\ref{thm:fal} and~\ref{thm:label}, reveal that the two group criteria are necessary conditions for group-IL to survive spurious correlations. Now we discuss the empirical verification of the significance of the two criteria. \par

\paragraph{Falsity exposure criterion.} To show the significance of the falsity exposure criterion, we compare the performance of methods under the case when the label balance criterion is satisfied. On PC-MNIST, both SCILL and EIIL groups satisfy the label balance criterion\footnote{The ratio of label 0 and label 1 on group 0 and group 1 in EIIL is 1:1.03 and 1:1.04, respectively.}, while EIIL groups provably violate the falsity exposure, according to Proposition~\ref{prop:foei}. The significant improvement of SCILL over EIIL on Table~\ref{tab:pcmnist} then shows the importance of the falsity exposure criterion. To exclude the effect of the noise in the reference model in the group inference, we further implement SCILL with the ground-truth spurious predictor, obtaining SCILL{\scriptsize gt} in Table~\ref{tab:pcmnist}. The groups then satisfy the falsity exposure criterion. We construct the ground truth majority/minority split and experiment with IL methods, obtaining results in the row maj./min. in Table~\ref{tab:pcmnist}. The significant performance drop of maj./min. compared with SCILL{\scriptsize gt} verifies the importance of falsity exposure for group-IL. On MNLI, the label in EIIL groups is unbalanced (See the appendix). Therefore we attach the instance reweight step as in SCILL to EIIL, obtaining EIIL{\scriptsize lb} which satisfies the label balance criterion. As shown in Table~\ref{tab:hans}, EIIL{\scriptsize lb} fails to achieve improved performance, which verifies the necessity of falsity exposure.

\paragraph{Label balance criterion.} To verify the necessity of the label balance criterion, we investigate the cases when the falsity exposure is satisfied. As the SCILL{\scriptsize gt} on PC-MNIST satisfies the falsity exposure, we construct such cases by disturbing the label balancing weights in SCILL. We multiply the estimated label proportion of class 0 by 0.5 to get the unbalanced SCILL{\scriptsize ub}. As shown in Table~\ref{tab:pcmnist}, under Oracle selection, the test accuracy of SCILL{\scriptsize ub} drops approximately 15\% compared with SCILL{\scriptsize gt}, thus verifying the impact of label balance. More results can be found in the appendix.

We further show the importance of the instance reweight step in SCILL, which is designed following the label balance criterion. For this, we remove the instance reweight step in SCILL, obtaining SCILL{\scriptsize uw}. The experimental results in Table~\ref{tab:pcmnist} and \ref{tab:hans} show that SCILL{\scriptsize uw} performs worse than SCILL, demonstrating the importance of the instance reweight step in SCILL.

\section{Discussions}
\label{sec:disc}

The main limitation of the paper is our assumptions on the causal structure. In fact, our conclusions can be generalized to more complex structures, e.g. those in~\citep{veitch2021counterfactual} (See the appendix). The most central assumption is the conditional independence between $X_{inv}$ and $X_{sp}$ given $Y$, which is adopted in many existing works on solving spurious correlations~\citep{clark2019don,veitch2021counterfactual,xiong2021uncertainty}. It would be an important direction to find causal structures on which the assumption is not satisfied while group-IL can still be effective. An extended discussion is provided in the appendix.

As this paper focuses on analyzing group invariant learning, comparing SCILL with other algorithms besides group-IL is beyond the scope of this paper. Notably, the form of the objective of SCILL appears to be similar to two recent methods in solving spurious correlations~\citep{makar2022causally,puli2021predictive}, though the penalties are different. However, they are only applied for the case when spurious features can be explicitly defined~\citep{puli2021predictive}, and are also discrete as assumed in~\citep{makar2022causally}. 

Besides the setting of group invariant learning, the two criteria may also bring benefits to the study of domain generalization. Furthermore, our empirical results show SCILL can achieve good performance with other kinds of invariances besides that in invariant learning, e.g. the invariance of the distribution of model outputs conditioned on the label. Discussing the effect of the two group criteria on other kinds of group invariance is a potential research direction.

\section{Conclusion}
\label{sec:future}
This paper is concerned with when group invariant learning (group-IL) can survive spurious correlations. We first formulate the setting of group-IL and necessary assumptions. Then we theoretically analyze the necessary conditions for group-IL in learning spurious-free predictors, and obtain two group criteria, i.e.~falsity exposure and label balance. Considering the limitations of previous group-IL methods, we propose a new method SCILL to satisfy the two criteria. Furthermore, we theoretically prove that SCILL has the ability to learn a spurious-free predictor. Finally, we conduct extensive experiments on both synthetic and real data to evaluate the proposed new method. Experimental results show that SCILL significantly outperforms existing SOTA group-IL methods, owing to its ability to satisfy the two criteria. The empirical studies validate our theoretical findings.

\section{Acknowledgement}
This work was supported by National Key R\&D Program of China No. 2021YFF1201600, Vanke Special Fund for Public Health and Health Discipline Development, Tsinghua University (NO.20221080053), and Beijing Academy of Artificial Intelligence (BAAI). The authors would like to thank Keyue Qiu and Yuan Li for providing useful feedback on the draft.

\bibliographystyle{plainnat}
\bibliography{neurips}

\begin{thebibliography}{42}
\providecommand{\natexlab}[1]{#1}
\providecommand{\url}[1]{\texttt{#1}}
\expandafter\ifx\csname urlstyle\endcsname\relax
  \providecommand{\doi}[1]{doi: #1}\else
  \providecommand{\doi}{doi: \begingroup \urlstyle{rm}\Url}\fi

\bibitem[Ahmed et~al.(2021)Ahmed, Bengio, van Seijen, and
  Courville]{ahmed2021systematic}
Faruk Ahmed, Yoshua Bengio, Harm van Seijen, and Aaron Courville.
\newblock Systematic generalisation with group invariant predictions.
\newblock In \emph{International Conference on Learning Representations}, 2021.

\bibitem[Ahuja et~al.(2021{\natexlab{a}})Ahuja, Caballero, Zhang, Bengio,
  Mitliagkas, and Rish]{ahuja2021invariance}
Kartik Ahuja, Ethan Caballero, Dinghuai Zhang, Yoshua Bengio, Ioannis
  Mitliagkas, and Irina Rish.
\newblock Invariance principle meets information bottleneck for
  out-of-distribution generalization, 2021{\natexlab{a}}.

\bibitem[Ahuja et~al.(2021{\natexlab{b}})Ahuja, Wang, Dhurandhar, Shanmugam,
  and Varshney]{ahuja2021empirical}
Kartik Ahuja, Jun Wang, Amit Dhurandhar, Karthikeyan Shanmugam, and Kush~R
  Varshney.
\newblock Empirical or invariant risk minimization? a sample complexity
  perspective.
\newblock In \emph{International Conference on Learning Representations},
  2021{\natexlab{b}}.

\bibitem[Arjovsky et~al.(2019)Arjovsky, Bottou, Gulrajani, and
  Lopez-Paz]{arjovsky2019invariant}
Martin Arjovsky, L{\'e}on Bottou, Ishaan Gulrajani, and David Lopez-Paz.
\newblock Invariant risk minimization.
\newblock \emph{arXiv preprint arXiv:1907.02893}, 2019.

\bibitem[Bae et~al.(2022)Bae, Choi, and Lee]{bae2022blood}
Jun-Hyun Bae, Inchul Choi, and Minho Lee.
\newblock {BLOOD}: Bi-level learning framework for out-of-distribution
  generalization, 2022.
\newblock URL \url{https://openreview.net/forum?id=Cm08egNmrl3}.

\bibitem[Beery et~al.(2018)Beery, Van~Horn, and Perona]{beery2018recognition}
Sara Beery, Grant Van~Horn, and Pietro Perona.
\newblock Recognition in terra incognita.
\newblock In \emph{Proceedings of the European conference on computer vision
  (ECCV)}, pages 456--473, 2018.

\bibitem[Clark et~al.(2019)Clark, Yatskar, and Zettlemoyer]{clark2019don}
Christopher Clark, Mark Yatskar, and Luke Zettlemoyer.
\newblock Don’t take the easy way out: Ensemble based methods for avoiding
  known dataset biases.
\newblock In \emph{Proceedings of the 2019 Conference on Empirical Methods in
  Natural Language Processing and the 9th International Joint Conference on
  Natural Language Processing (EMNLP-IJCNLP)}, pages 4060--4073, 2019.

\bibitem[Creager et~al.(2021)Creager, Jacobsen, and
  Zemel]{creager21environment}
Elliot Creager, J{\"o}rn-Henrik Jacobsen, and Richard Zemel.
\newblock Environment inference for invariant learning.
\newblock In \emph{International Conference on Machine Learning}, 2021.

\bibitem[Dai and Van~Gool(2018)]{dai2018dark}
Dengxin Dai and Luc Van~Gool.
\newblock Dark model adaptation: Semantic image segmentation from daytime to
  nighttime.
\newblock In \emph{2018 21st International Conference on Intelligent
  Transportation Systems (ITSC)}, pages 3819--3824. IEEE, 2018.

\bibitem[Devlin et~al.(2019)Devlin, Chang, Lee, and Toutanova]{devlin2019bert}
Jacob Devlin, Ming-Wei Chang, Kenton Lee, and Kristina Toutanova.
\newblock Bert: Pre-training of deep bidirectional transformers for language
  understanding.
\newblock In \emph{Proceedings of the 2019 Conference of the North American
  Chapter of the Association for Computational Linguistics: Human Language
  Technologies, Volume 1 (Long and Short Papers)}, pages 4171--4186, 2019.

\bibitem[Geirhos et~al.(2020)Geirhos, Jacobsen, Michaelis, Zemel, Brendel,
  Bethge, and Wichmann]{geirhos2020shortcut}
Robert Geirhos, J{\"o}rn-Henrik Jacobsen, Claudio Michaelis, Richard Zemel,
  Wieland Brendel, Matthias Bethge, and Felix~A Wichmann.
\newblock Shortcut learning in deep neural networks.
\newblock \emph{Nature Machine Intelligence}, 2\penalty0 (11):\penalty0
  665--673, 2020.

\bibitem[Gulrajani and Lopez-Paz(2021)]{gulrajani2021search}
Ishaan Gulrajani and David Lopez-Paz.
\newblock In search of lost domain generalization.
\newblock In \emph{International Conference on Learning Representations}, 2021.

\bibitem[Gururangan et~al.(2018)Gururangan, Swayamdipta, Levy, Schwartz,
  Bowman, and Smith]{gururangan2018annotation}
Suchin Gururangan, Swabha Swayamdipta, Omer Levy, Roy Schwartz, Samuel Bowman,
  and Noah~A Smith.
\newblock Annotation artifacts in natural language inference data.
\newblock In \emph{Proceedings of the 2018 Conference of the North American
  Chapter of the Association for Computational Linguistics: Human Language
  Technologies, Volume 2 (Short Papers)}, pages 107--112, 2018.

\bibitem[Imbens and Rubin(2015)]{imbens2015causal}
Guido~W Imbens and Donald~B Rubin.
\newblock \emph{Causal inference in statistics, social, and biomedical
  sciences}.
\newblock Cambridge University Press, 2015.

\bibitem[Krueger et~al.(2021)Krueger, Caballero, Jacobsen, Zhang, Binas, Zhang,
  Le~Priol, and Courville]{krueger2021out}
David Krueger, Ethan Caballero, Joern-Henrik Jacobsen, Amy Zhang, Jonathan
  Binas, Dinghuai Zhang, Remi Le~Priol, and Aaron Courville.
\newblock Out-of-distribution generalization via risk extrapolation (rex).
\newblock In \emph{International Conference on Machine Learning}, pages
  5815--5826. PMLR, 2021.

\bibitem[Kruskal and Wallis(1952)]{kruskal1952use}
William~H Kruskal and W~Allen Wallis.
\newblock Use of ranks in one-criterion variance analysis.
\newblock \emph{Journal of the American statistical Association}, 47\penalty0
  (260):\penalty0 583--621, 1952.

\bibitem[Li et~al.(2018)Li, Tian, Gong, Liu, Liu, Zhang, and Tao]{li2018deep}
Ya~Li, Xinmei Tian, Mingming Gong, Yajing Liu, Tongliang Liu, Kun Zhang, and
  Dacheng Tao.
\newblock Deep domain generalization via conditional invariant adversarial
  networks.
\newblock In \emph{Proceedings of the European Conference on Computer Vision
  (ECCV)}, pages 624--639, 2018.

\bibitem[Lin et~al.(2022)Lin, Zhu, and Cui]{lin2022zin}
Yong Lin, Shengyu Zhu, and Peng Cui.
\newblock Zin: When and how to learn invariance by environment inference?
\newblock \emph{arXiv preprint arXiv:2203.05818}, 2022.

\bibitem[Liu et~al.(2021{\natexlab{a}})Liu, Haghgoo, Chen, Raghunathan, Koh,
  Sagawa, Liang, and Finn]{liu2021just}
Evan~Z Liu, Behzad Haghgoo, Annie~S Chen, Aditi Raghunathan, Pang~Wei Koh,
  Shiori Sagawa, Percy Liang, and Chelsea Finn.
\newblock Just train twice: Improving group robustness without training group
  information.
\newblock In \emph{International Conference on Machine Learning}, pages
  6781--6792. PMLR, 2021{\natexlab{a}}.

\bibitem[Liu et~al.(2021{\natexlab{b}})Liu, Hu, Cui, Li, and Shen]{liu21hrm}
Jiashuo Liu, Zheyuan Hu, Peng Cui, Bo~Li, and Zheyan Shen.
\newblock Heterogeneous risk minimization.
\newblock In Marina Meila and Tong Zhang, editors, \emph{Proceedings of the
  38th International Conference on Machine Learning}, volume 139 of
  \emph{Proceedings of Machine Learning Research}, pages 6804--6814. PMLR,
  18--24 Jul 2021{\natexlab{b}}.

\bibitem[Liu et~al.(2020)Liu, Xin, Chang, and Sui]{liu2020hyponli}
Tianyu Liu, Zheng Xin, Baobao Chang, and Zhifang Sui.
\newblock Hyponli: Exploring the artificial patterns of hypothesis-only bias in
  natural language inference.
\newblock In \emph{Proceedings of The 12th Language Resources and Evaluation
  Conference}, pages 6852--6860, 2020.

\bibitem[Lowry(2014)]{lowry2014concepts}
Richard Lowry.
\newblock Concepts and applications of inferential statistics, 2014.

\bibitem[Lu et~al.(2022)Lu, Wu, Hern{\'a}ndez-Lobato, and
  Sch{\"o}lkopf]{lu2022invariant}
Chaochao Lu, Yuhuai Wu, Jos{\'e}~Miguel Hern{\'a}ndez-Lobato, and Bernhard
  Sch{\"o}lkopf.
\newblock Invariant causal representation learning for out-of-distribution
  generalization.
\newblock In \emph{International Conference on Learning Representations}, 2022.

\bibitem[Mahabadi et~al.(2020)Mahabadi, Belinkov, and
  Henderson]{mahabadi2020end}
Rabeeh~Karimi Mahabadi, Yonatan Belinkov, and James Henderson.
\newblock End-to-end bias mitigation by modelling biases in corpora.
\newblock In \emph{Proceedings of the 58th Annual Meeting of the Association
  for Computational Linguistics}, pages 8706--8716, 2020.

\bibitem[Mahajan et~al.(2021)Mahajan, Tople, and Sharma]{mahajan2021domain}
Divyat Mahajan, Shruti Tople, and Amit Sharma.
\newblock Domain generalization using causal matching.
\newblock In \emph{International Conference on Machine Learning}, pages
  7313--7324. PMLR, 2021.

\bibitem[Makar et~al.(2022)Makar, Packer, Moldovan, Blalock, Halpern, and
  D’Amour]{makar2022causally}
Maggie Makar, Ben Packer, Dan Moldovan, Davis Blalock, Yoni Halpern, and
  Alexander D’Amour.
\newblock Causally motivated shortcut removal using auxiliary labels.
\newblock In \emph{International Conference on Artificial Intelligence and
  Statistics}, pages 739--766. PMLR, 2022.

\bibitem[McCoy et~al.(2019)McCoy, Pavlick, and Linzen]{mccoy2019right}
Tom McCoy, Ellie Pavlick, and Tal Linzen.
\newblock Right for the wrong reasons: Diagnosing syntactic heuristics in
  natural language inference.
\newblock In \emph{Proceedings of the 57th Annual Meeting of the Association
  for Computational Linguistics}, pages 3428--3448, 2019.

\bibitem[Nam et~al.(2020)Nam, Cha, Ahn, Lee, and Shin]{nam2020learning}
Junhyun Nam, Hyuntak Cha, Sung-Soo Ahn, Jaeho Lee, and Jinwoo Shin.
\newblock Learning from failure: De-biasing classifier from biased classifier.
\newblock \emph{Advances in Neural Information Processing Systems}, 33, 2020.

\bibitem[Pearl(2009)]{pearl2009causality}
Judea Pearl.
\newblock \emph{Causality}.
\newblock Cambridge university press, 2009.

\bibitem[Peters et~al.(2016)Peters, B{\"u}hlmann, and
  Meinshausen]{peters2016causal}
Jonas Peters, Peter B{\"u}hlmann, and Nicolai Meinshausen.
\newblock Causal inference by using invariant prediction: identification and
  confidence intervals.
\newblock \emph{Journal of the Royal Statistical Society: Series B (Statistical
  Methodology)}, 78\penalty0 (5):\penalty0 947--1012, 2016.

\bibitem[Poliak et~al.(2018)Poliak, Naradowsky, Haldar, Rudinger, and
  Van~Durme]{poliak2018hypothesis}
Adam Poliak, Jason Naradowsky, Aparajita Haldar, Rachel Rudinger, and Benjamin
  Van~Durme.
\newblock Hypothesis only baselines in natural language inference.
\newblock In \emph{Proceedings of the Seventh Joint Conference on Lexical and
  Computational Semantics}, pages 180--191, 2018.

\bibitem[Puli et~al.(2021)Puli, Zhang, Oermann, and
  Ranganath]{puli2021predictive}
Aahlad Puli, Lily~H Zhang, Eric~K Oermann, and Rajesh Ranganath.
\newblock Predictive modeling in the presence of nuisance-induced spurious
  correlations.
\newblock \emph{arXiv preprint arXiv:2107.00520}, 2021.

\bibitem[Rosenfeld et~al.(2020)Rosenfeld, Ravikumar, and
  Risteski]{rosenfeld2020risks}
Elan Rosenfeld, Pradeep~Kumar Ravikumar, and Andrej Risteski.
\newblock The risks of invariant risk minimization.
\newblock In \emph{International Conference on Learning Representations}, 2020.

\bibitem[Singla and Feizi(2022)]{singla2022salient}
Sahil Singla and Soheil Feizi.
\newblock Salient imagenet: How to discover spurious features in deep learning?
\newblock In \emph{International Conference on Learning Representations}, 2022.

\bibitem[Teney et~al.(2021)Teney, Abbasnejad, and van~den
  Hengel]{teney2021unshuffling}
Damien Teney, Ehsan Abbasnejad, and Anton van~den Hengel.
\newblock Unshuffling data for improved generalization in visual question
  answering.
\newblock In \emph{Proceedings of the IEEE/CVF International Conference on
  Computer Vision}, pages 1417--1427, 2021.

\bibitem[Utama et~al.(2020)Utama, Moosavi, and Gurevych]{utama2020mind}
Prasetya~Ajie Utama, Nafise~Sadat Moosavi, and Iryna Gurevych.
\newblock Mind the trade-off: Debiasing nlu models without degrading the
  in-distribution performance.
\newblock \emph{arXiv preprint arXiv:2005.00315}, 2020.

\bibitem[Veitch et~al.(2021)Veitch, D'Amour, Yadlowsky, and
  Eisenstein]{veitch2021counterfactual}
Victor Veitch, Alexander D'Amour, Steve Yadlowsky, and Jacob Eisenstein.
\newblock Counterfactual invariance to spurious correlations in text
  classification.
\newblock \emph{Advances in Neural Information Processing Systems}, 34, 2021.

\bibitem[Volk et~al.(2019)Volk, M{\"u}ller, Von~Bernuth, Hospach, and
  Bringmann]{volk2019towards}
Georg Volk, Stefan M{\"u}ller, Alexander Von~Bernuth, Dennis Hospach, and
  Oliver Bringmann.
\newblock Towards robust cnn-based object detection through augmentation with
  synthetic rain variations.
\newblock In \emph{2019 IEEE Intelligent Transportation Systems Conference
  (ITSC)}, pages 285--292. IEEE, 2019.

\bibitem[Wald et~al.(2021)Wald, Feder, Greenfeld, and
  Shalit]{wald2021calibration}
Yoav Wald, Amir Feder, Daniel Greenfeld, and Uri Shalit.
\newblock On calibration and out-of-domain generalization.
\newblock \emph{Advances in Neural Information Processing Systems}, 34, 2021.

\bibitem[Wiles et~al.(2021)Wiles, Gowal, Stimberg, Alvise-Rebuffi, Ktena,
  Cemgil, et~al.]{wiles2021fine}
Olivia Wiles, Sven Gowal, Florian Stimberg, Sylvestre Alvise-Rebuffi, Ira
  Ktena, Taylan Cemgil, et~al.
\newblock A fine-grained analysis on distribution shift.
\newblock \emph{arXiv preprint arXiv:2110.11328}, 2021.

\bibitem[Williams et~al.(2018)Williams, Nangia, and Bowman]{williams2018broad}
Adina Williams, Nikita Nangia, and Samuel Bowman.
\newblock A broad-coverage challenge corpus for sentence understanding through
  inference.
\newblock In \emph{Proceedings of the 2018 Conference of the North American
  Chapter of the Association for Computational Linguistics: Human Language
  Technologies, Volume 1 (Long Papers)}, pages 1112--1122, 2018.

\bibitem[Xiong et~al.(2021)Xiong, Chen, Pang, Cheng, Ma, and
  Lan]{xiong2021uncertainty}
Ruibin Xiong, Yimeng Chen, Liang Pang, Xueqi Cheng, Zhi-Ming Ma, and Yanyan
  Lan.
\newblock Uncertainty calibration for ensemble-based debiasing methods.
\newblock \emph{Advances in Neural Information Processing Systems}, 34, 2021.

\end{thebibliography}

\clearpage
\appendix

\section{The statistical split algorithm}
This section introduces details of the statistical split algorithm, which is for the binary classification case. For the multi-class case, the two-sample t-test here should be substituted by one-way ANOVA~\cite{lowry2014concepts} or Kruskal-Wallis Test~\cite{kruskal1952use}.\par

\begin{algorithm}[]
   \caption{Statistical split}
   \label{alg:example}
\begin{algorithmic}
   \STATE {\bfseries Input:} $\mathcal{S} = \{f_r(x)|x \in \text{the training set}\}$, $threshold$ for the t-statistic.
   \STATE Initialize $queue = [S]$, $G = [\,]$
   \REPEAT
   \STATE Pop the head item $B$ in $queue$
   \STATE Divide $B$ into $L_0, L_1$ according to the label, i.e. $L_0 := \{f_r(x) \in B| \text{the label of } x \,\text{is}\, 0\}$ and $L_1 := B \setminus L_0$
   \STATE Compute the two-sample t-statistic $T_B$ of $\log (f_r(x)/(1-f_r(x)))$ on $L_0, L_1$ and the corresponding $p$ value
   \IF{$T_B > {threshold}$}
   \STATE Split $B$ using the median value $m$ of $\{f_r(x)_0| x \in B\}$, i.e. $B' := \{f_r(x)| f_r(x)_0 < m, x \in B \}$, $B'' := B \setminus B'$
   \STATE Append $B'$, $B''$ to the end of the $queue$
   \ELSE
   \STATE Append $B$ to $G$
   \ENDIF
   \UNTIL{$queue$ is empty}
   \STATE {\bfseries Output:} $G$
\end{algorithmic}
\end{algorithm}

In our main experiments on PC-MNIST and MNLI, we set the threshold for $T_B$ to $10$. It can be seen in Table~\ref{aptab:robust} that the number of groups increases as $T_B$ decreases. The two-sample t-statistic is computed with the function \texttt{scipy.stats.ttest}\_\texttt{ind} in the Python package \texttt{scipy}.

We also experimented with the case when the condition for split the block is set as $p < p_{thr}$, where $p$ is the p-value of the two sample test, $p_{thr}$ is a threshold for the p-value. We set $p_{thr}=0.01$. Results are shown in Table~\ref{aptab:robust}. 

\section{Experimental Details}
\label{apsec:exp}

\subsection{Model Selection}
It has been argued that model selection is at the heart of domain generalization~\cite{gulrajani2021search}. In our experiments, methods are also tested with out-of-distribution data, thus it is important to specify the model selection criteria as well. Existing works adopt either training set validation~\cite{utama2020mind} or oracle validation using test data~\cite{creager21environment} to perform model selection. 

\paragraph{In-distribution validation (ID)} Hyper-parameters are selected using the in-distribution validation set, i.e. the validation set randomly split from the training set.\par

\paragraph{Test-distribution validation (Oracle)} Hyper-parameters are selected using the test validation set, i.e. the validation set randomly split from the test set.\par

However, both approaches are suggested as non-optimal, by discussions in several literature~\cite{gulrajani2021search}. Specifically, in-distribution validation sets can fall short in distinguishing the reference models. Oracle validation supposes the access of test distribution, which sometimes contradicts the setting of debiasing. As a result, we also test with TEV, by adapting the widely used strategy in domain generalization, i.e.~Training Environments Validation (TEV)~\cite{gulrajani2021search} to the inferred reweighted groups. A major advantage of TEV is that it supposes no access to the test data. \par

\paragraph{Training environments validation (TEV)} We split the training set into training and validation subsets. In the validation step, samples in the validation set are allocated to the inferred groups in the training set. Specifically, we denote the average outputs of $f_r$ on each group as its center. Each sample in the validation set is allocated to the group with the nearest center, measured by the $L_2$ distance. The weight of the sample is then set to the same as the training samples of the same label in the group. Hyper-parameters are selected using the reweighted validation set.

\subsection{Dataset Details}

Patched-Colored MNIST (PC-MNIST) is a synthetic binary classification dataset. It is derived from MNIST, by assigning color and patch to each image as the spurious features. The design of the patch feature is inspired by~\cite{bae2022blood}. Firstly, the handwriting with original digit label 0-4 are labeled 0, and those with 5-9 are labeled 1. Label noise is then added by flipping the label $y$ with probability $p_{noise}$. After that, the color label is assigned by flipping the label $y$ with probability $p_{color}$, i.e. $\mathbb{P}(Y=0|\textit{color}=0)=1-p_{color}$. Similarly, the patch label is assigned by flipping the label $y$ with probability $p_{patch}$. We attach a $3 \times 3$ black patch on the left top corner to the sample with patch label $1$, otherwise on the right bottom corner. In our experiments, the training dataset has $p_{color}=0.1$, $p_{patch}=0.3$, and in the test set $p_{color}$ and $p_{patch}$ are both set as $0.5$, i.e. uncorrelated with label $y$. The $p_{noise}$ is set to $0.25$ following that on Colored-MNIST~\citep{arjovsky2019invariant}. The accuracy on test set is regarded as the performance of the model in solving model's dependence on spurious correlations.

MNLI-HANS is a benchmark widely used in many previous debiasing works, such as~\cite{clark2019don,utama2020mind}. In our experiments, we follow the practice to utilize MNLI~\cite{williams2018broad} as the training data and HANS (Heuristic Analysis for NLI Systems)~\citep{mccoy2019right} as the test data. In our experiments, we consider the syntactic spurious correlations, e.g.~the lexical overlap between premise and hypothesis sentences is strongly correlated with the entailment label~\citep{mccoy2019right}. While for HANS, the specific syntactic correlations are eliminated with manually constructed samples. Therefore, the accuracy on HANS is regarded as the performance of a concerned model in generalizing to the spurious correlation shift.

The statistics of the two datasets are shown as follows.
\paragraph{PC-MNIST.}  The training set contains 50000 instances from MNLI. In-distribution validation set, oracle set, and test set all contain 5000 instances. All four sets are generated by the same algorithm, only vary in the $p_{color}$ and $p_{patch}$ parameters. The training and validation set have $p_{color}=0.1$, $p_{patch}=0.3$. In the oracle and test set $p_{color}$ and $p_{patch}$ are both set as $0.5$, i.e. uncorrelated with label $y$.

\paragraph{MNLI-HANS.} MNLI contains approximately 393 thousand training samples. HANS contains 30000 samples.
We use the MNLI-matched development as the in-distribution validation set, which contains approximately 10000 samples. The oracle set contains 1000 instances randomly selected from HANS.

\begin{table}
\setlength\tabcolsep{5.0 pt}
\caption{Robustness study on PC-MNIST. It shows the performance of SCILL-IRM with threshold $5,10,20$ on t-statistics in the statistical split algorithm and when it is substituted with a threshold on the $p$-value. Top 2 values are in bold. Results in Table 1 are all under threshold 10.}
\label{aptab:robust}
\begin{center}
\begin{small}
\begin{tabular}{l|c|cc|cc|cc}
\toprule
{\multirow{2}{*}{\textbf{Method}}} & {\multirow{2}{*}{\textbf{\#G}}} & \multicolumn{2}{c|}{\textbf{ID}} & \multicolumn{2}{c|}{\textbf{Oracle}} & 
    \multicolumn{2}{c}{\textbf{TEV}} \\ 
    & & \textbf{Val}&\textbf{Test}&\textbf{Val}&\textbf{Test}&\textbf{Val}&\textbf{Test} \\
\midrule
ERM & -  & 90.22 {\scriptsize $\pm$ 0.56 }& 50.64 {\scriptsize $\pm$ 0.56  }& 89.95 {\scriptsize $\pm$ 0.45 }& 54.53 {\scriptsize $\pm$ 0.60 }& - & - \\
\midrule
SCILL-thr-20 & 6 & 83.15 {\scriptsize $\pm$ 0.47 }& 60.14 {\scriptsize $\pm$ 1.12 }& 73.37 {\scriptsize $\pm$ 0.65 }& \textbf{67.95} {\scriptsize $\pm$ 0.66 }& 72.59 {\scriptsize $\pm$ 0.33 }& \textbf{67.79} {\scriptsize $\pm$ 0.57} \\
SCILL-thr-15 & 7 & 82.84 {\scriptsize $\pm$ 0.61 }& 59.79 {\scriptsize $\pm$ 1.00 }& 73.07 {\scriptsize $\pm$ 0.69 }& \textbf{68.17} {\scriptsize $\pm$ 0.56 }& 72.31 {\scriptsize $\pm$ 0.32 }& \textbf{67.87} {\scriptsize $\pm$ 0.37} \\
SCILL-thr-10 & 9 & 79.65 {\scriptsize $\pm$ 0.76 }& \textbf{62.49} {\scriptsize $\pm$ 0.55 }& 71.54 {\scriptsize $\pm$ 0.35 }& 67.46 {\scriptsize $\pm$ 0.19 }& 71.54 {\scriptsize $\pm$ 0.35 }& 67.46 {\scriptsize $\pm$ 0.19} \\
SCILL-thr-5 & 15 & 76.91 {\scriptsize $\pm$ 0.60 }& 55.50 {\scriptsize $\pm$ 1.78 }& 66.29 {\scriptsize $\pm$ 13.1 }& 58.81 {\scriptsize $\pm$ 2.35 }& 60.29 {\scriptsize $\pm$ 9.97 }& 61.89 {\scriptsize $\pm$ 3.96} \\
SCILL-p-0.01 & 23 & 79.75 {\scriptsize $\pm$ 0.32 }& \textbf{62.47} {\scriptsize $\pm$ 0.93 }& 69.63 {\scriptsize $\pm$ 0.54 }& 66.78 {\scriptsize $\pm$ 0.64 }& 72.63 {\scriptsize $\pm$ 1.21 }& 67.46 {\scriptsize $\pm$ 0.46} \\
\bottomrule
\end{tabular}
\end{small}
\end{center}
\end{table}

\subsection{Experimental Settings and Hyper-parameter Tuning}
\paragraph{PC-MNIST.} 
The classifier on PC-MNIST is a MLP with two hidden layers of 390 neurons. The reference model has the same structure but was trained with ERM for $100$ epochs on the training set. We train each model with $800$ epochs. Following~\citet{arjovsky2019invariant}, the penalty is applied after training for several annealing epochs. Models are tested every 60 epochs to get their accuracy on 3 validation sets. 

We conduct grid search on hyper-parameters. The learning rate is searched over $\{1e-4, 5e-4, 1e-3, 5e-3\}$ for all the method. For each invariant learning based method, the penalty weight $\lambda$ is searched over the range of $\{0.1, 1, 10, 100\}$. The number of annealing epochs is searched over $\{100, 300, 500, 700\}$.

\paragraph{MNLI-HANS.} 
On MNLI, the reference model is the bias-only classifier proposed in~\cite{utama2020mind} which is trained on top of some hand-crafted syntactic features, including (1) whether all words in the hypothesis exist in the premise; (2) whether the hypothesis is a continuous sub-sequence of the premise; (3) the fraction of premise words that shared with hypotheses; (4) the mean, min, max of cosine similarities between word vectors in the premise and the hypothesis.

We follow the default setting in \cite{utama2020mind} to fine-tune the \texttt{bert-base-uncased} model 3 epochs, with the learning rate set to $5\times10^{-5}$. We follow~\cite{ahmed2021systematic} to set a rate which linearly ramp up the penalty weight according to batch counts. Grid search is also conducted. For each group-IL method, the penalty weight $\lambda$ is searched over the range of $\{1e-2, 1e-3, 1e-4\}$. The rate to to linearly ramp up $\lambda$ is searched over $\{0.2, 0.4, 0.6\}$.

\begin{table*}[t]
\setlength\tabcolsep{4.8 pt}
\caption{Classification accuracy on HANS. Results of methods marked with dagger are cited from~\protect{\cite{utama2020mind}}.}
\begin{center}
\begin{small}
\begin{tabular}{c|c|cc|cc|cc}
\toprule
{\multirow{2}{*}{\textbf{Method}}} & {\multirow{2}{*}{\textbf{Penalty}}} & \multicolumn{2}{c|}{\textbf{ID}} & \multicolumn{2}{c|}{\textbf{Oracle}} & 
    \multicolumn{2}{c}{\textbf{TEV}} \\ 
    & & \textbf{Val}&\textbf{Test}&\textbf{Val}&\textbf{Test}&\textbf{Val}&\textbf{Test} \\
\midrule
ERM   & - & 84.12 {\scriptsize $\pm$ 0.15 }& 64.88 {\scriptsize $\pm$ 3.00 }& 84.12 {\scriptsize $\pm$ 0.15 }& 64.88 {\scriptsize $\pm$ 3.00 }& - & -\\
PoE$^{\dag}$ & - & 82.8 {\scriptsize $\pm$ 0.2 }& 69.2 {\scriptsize $\pm$ 2.6 }& - & - & - & - \\
ConfReg$^{\dag}$ & - & 84.3 {\scriptsize $\pm$ 0.1 }& 69.1 {\scriptsize $\pm$ 1.2 }& - & - & - & -\\
\midrule
& IRM  & 84.01 {\scriptsize $\pm$ 0.08 }& 65.35 {\scriptsize $\pm$ 0.93 }& 83.82 {\scriptsize $\pm$ 0.17 }& 66.42 {\scriptsize $\pm$ 0.98 }& 84.01 {\scriptsize $\pm$ 0.08 }& 65.35 {\scriptsize $\pm$ 0.93} \\
EIIL & REx & 84.10 {\scriptsize $\pm$ 0.13  }& 65.16 {\scriptsize $\pm$ 0.19 }& 83.91 {\scriptsize $\pm$ 0.20 }& 66.87 {\scriptsize $\pm$ 2.92 }& 84.00 {\scriptsize $\pm$ 0.48 }& 66.43 {\scriptsize $\pm$ 1.00} \\
& cMMD & 83.56 {\scriptsize $\pm$ 0.03  }& 63.22 {\scriptsize $\pm$ 1.76  }& 83.22 {\scriptsize $\pm$ 0.13  }& 64.25 {\scriptsize $\pm$ 1.63 }& 83.38 {\scriptsize $\pm$ 0.20  }& 62.72 {\scriptsize $\pm$ 2.03  }\\
& PGI & 84.17 {\scriptsize $\pm$ 0.08  }& 65.57 {\scriptsize $\pm$ 2.25 }& 83.78 {\scriptsize $\pm$ 0.03 }& 66.02 {\scriptsize $\pm$ 0.93 }& 83.94 {\scriptsize $\pm$ 0.64 }& 65.57 {\scriptsize $\pm$ 2.25} \\
\midrule
& IRM & 82.75 {\scriptsize $\pm$ 0.17 }& 69.11 {\scriptsize $\pm$ 1.76 }& 82.56 {\scriptsize $\pm$ 0.33 }& 68.72 {\scriptsize $\pm$ 1.24 }& 82.67 {\scriptsize $\pm$ 0.14 }& 69.82 {\scriptsize $\pm$ 1.29} \\ 
SCILL & REx & 82.68 {\scriptsize $\pm$ 0.28 }& \textbf{69.73} {\scriptsize $\pm$ 1.63 }& 82.59 {\scriptsize $\pm$ 0.22 }& \textbf{71.20} {\scriptsize $\pm$ 1.81 }& 82.56 {\scriptsize $\pm$ 0.33 }& 69.75 {\scriptsize $\pm$ 1.53} \\
& cMMD & 82.74 {\scriptsize $\pm$ 0.26  }& 69.15 {\scriptsize $\pm$ 1.39  }& 82.39 {\scriptsize $\pm$ 0.45  }& 70.77 {\scriptsize $\pm$ 1.40  }& 82.61 {\scriptsize $\pm$ 0.04  }& \textbf{70.92} {\scriptsize $\pm$ 0.79 }\\
& PGI  & 82.79 {\scriptsize $\pm$ 0.30 }& 68.57 {\scriptsize $\pm$ 0.54 }& 81.69 {\scriptsize $\pm$ 0.28 }& 70.99 {\scriptsize $\pm$ 0.48 }& 82.79 {\scriptsize $\pm$ 0.30 }& 68.57 {\scriptsize $\pm$ 0.54} \\
\bottomrule
\end{tabular}
\end{small}
\end{center}
\end{table*}

\subsection{Additional Comparisons}
\label{subsec:poe}
We additionally cite the results on MNLI-HANS of other state-of-the-art methods solving spurious correlations reported in~\cite{utama2020mind}. These methods use the same reference model adopted in our experiments, but adjust the training objective directly based on its outputs. For example, PoE~\cite{clark2019don} reweights the sample importance via the product-of-expert method. From the table, it shows that SCILL-REx outperforms methods in out-of-distribution accuracy with ID selection strategy. When SCILL is selected with TEV, SCILL-IRM and SCILL-cMMD also show improved performance. However, these baseline methods do not admit the TEV selection strategy, as no group is defined in their algorithms.

\begin{table*}[t]
\caption{Ablation study on PC-MNIST.}
\label{aptab:pcmnist}
\setlength\tabcolsep{5.2 pt}
\begin{center}
\begin{small}
\begin{tabular}{c|c|cc|cc|cc}
\toprule
{\multirow{2}{*}{\textbf{Method}}} & {\multirow{2}{*}{\textbf{Penalty}}} & \multicolumn{2}{c|}{\textbf{ID}} & \multicolumn{2}{c|}{\textbf{Oracle}} & 
    \multicolumn{2}{c}{\textbf{TEV}} \\ 
    & & \textbf{Val}&\textbf{Test}&\textbf{Val}&\textbf{Test}&\textbf{Val}&\textbf{Test} \\
\midrule
ERM & -  & 90.22 {\scriptsize $\pm$ 0.56}& 50.64 {\scriptsize $\pm$ 0.56} & 89.95 {\scriptsize $\pm$ 0.45} & 54.53 {\scriptsize $\pm$ 0.60 } & - & - \\
\midrule
& IRM & 79.65 {\scriptsize $\pm$ 0.76 }& 62.49 {\scriptsize $\pm$ 0.55 }& 71.54 {\scriptsize $\pm$ 0.35 }& 67.46 {\scriptsize $\pm$ 0.19 }& 71.54 {\scriptsize $\pm$ 0.35 }& 67.46 {\scriptsize $\pm$ 0.19} \\
SCILL & REx & 80.23 {\scriptsize $\pm$ 0.83 }& 62.13 {\scriptsize $\pm$ 0.99 }& 72.59 {\scriptsize $\pm$ 1.44 }& \textbf{67.60} {\scriptsize $\pm$ 0.24 }& 70.77 {\scriptsize $\pm$ 0.50 }& 67.33 {\scriptsize $\pm$ 0.30} \\
& cMMD & 83.13 {\scriptsize $\pm$ 0.93}& 59.76 {\scriptsize $\pm$ 0.92 }& 73.12 {\scriptsize $\pm$ 0.47 }& 67.49 {\scriptsize $\pm$ 0.52 }& 72.38 {\scriptsize $\pm$ 0.51 }& \textbf{67.81} {\scriptsize $\pm$ 0.34}\\
& PGI & 80.67 {\scriptsize $\pm$ 1.75 }& \textbf{62.52} {\scriptsize $\pm$ 0.32 }& 71.73 {\scriptsize $\pm$ 1.43 }& 67.26 {\scriptsize $\pm$ 0.14 }& 71.35 {\scriptsize $\pm$ 0.24 }& 67.36 {\scriptsize $\pm$ 0.33} \\
\midrule
& IRM & 90.27 {\scriptsize $\pm$ 0.39 }& 50.95 {\scriptsize $\pm$ 0.47 }& 90.07 {\scriptsize $\pm$ 0.34 }& 53.51 {\scriptsize $\pm$ 1.38 }& 90.28 {\scriptsize $\pm$ 0.39 }& 50.85 {\scriptsize $\pm$ 0.47} \\
SCILL{\scriptsize uw} & REx & 90.25 {\scriptsize $\pm$ 0.30 }& 51.50 {\scriptsize $\pm$ 1.08 }& 81.27 {\scriptsize $\pm$ 0.13 }& 61.63 {\scriptsize $\pm$ 0.64 }& 90.25 {\scriptsize $\pm$ 0.30 }& 51.50 {\scriptsize $\pm$ 1.08} \\
& cMMD & 90.31 {\scriptsize  $\pm$ 0.38 }& 51.70 {\scriptsize  $\pm$ 1.02 }& 89.89 {\scriptsize  $\pm$ 0.28 }& 54.50 {\scriptsize  $\pm$ 1.41 }& 90.23 {\scriptsize $\pm$ 0.32 }& 52.96 {\scriptsize $\pm$ 0.86 }\\
& PGI & 90.22 {\scriptsize $\pm$ 0.47 }& 51.00 {\scriptsize $\pm$ 0.52 }& 70.05 {\scriptsize $\pm$ 1.01 }& 66.82 {\scriptsize $\pm$ 1.01 }& 90.18 {\scriptsize $\pm$ 0.52 }& 51.44 {\scriptsize $\pm$ 0.58} \\
\midrule
opt & - & 75 & 75 & 75 & 75 & 75 & 75 \\
\bottomrule
\end{tabular}
\end{small}
\end{center}
\vskip -0.1in
\end{table*}

\begin{table*}[t]
\caption{Results on PCMNIST with ground-truth group splits.}
\label{tab:verification}
\setlength\tabcolsep{5.2 pt}
\begin{center}
\begin{small}
\begin{tabular}{c|c|cc|cc|cc}
\toprule
{\multirow{2}{*}{\textbf{Method}}} & {\multirow{2}{*}{\textbf{Penalty}}} & \multicolumn{2}{c|}{\textbf{ID}} & \multicolumn{2}{c|}{\textbf{Oracle}} & 
    \multicolumn{2}{c}{\textbf{TEV}} \\ 
    & & \textbf{Val}&\textbf{Test}&\textbf{Val}&\textbf{Test}&\textbf{Val}&\textbf{Test} \\
\midrule
ERM & -  & 90.22 {\scriptsize $\pm$ 0.56}& 50.64 {\scriptsize $\pm$ 0.56} & 89.95 {\scriptsize $\pm$ 0.45} & 54.53 {\scriptsize $\pm$ 0.60 } & - & - \\
\midrule
Maj./Min. & IRM & 90.18 {\scriptsize $\pm$ 0.26} & 50.67 {\scriptsize $\pm$ 0.15} & 80.10 {\scriptsize $\pm$ 0.21} & 63.85 {\scriptsize $\pm$ 0.58} & 90.18 {\scriptsize $\pm$ 0.26} & 50.67 {\scriptsize $\pm$ 0.15} \\
& REx & 90.18 {\scriptsize $\pm$ 0.27} & 50.74 {\scriptsize $\pm$ 0.15} & 78.95 {\scriptsize $\pm$ 2.49} & 64.00 {\scriptsize $\pm$ 1.47} & 90.18 {\scriptsize $\pm$ 0.27} & 50.74 {\scriptsize $\pm$ 0.15} \\
\midrule
SCILL{\scriptsize gt} & IRM & 82.55 {\scriptsize $\pm$ 0.28} & 61.12 {\scriptsize $\pm$ 1.17} & 74.46 {\scriptsize $\pm$ 0.25} & 70.19 {\scriptsize $\pm$ 0.39} & 72.30 {\scriptsize $\pm$ 0.40} & 70.91 {\scriptsize $\pm$ 0.06} \\
& REx & 82.22 {\scriptsize $\pm$ 0.73} & 60.16 {\scriptsize $\pm$ 0.21} & 73.76 {\scriptsize $\pm$ 0.25} & 70.63 {\scriptsize $\pm$ 0.36} & 72.21 {\scriptsize $\pm$ 0.31} & 71.04 {\scriptsize $\pm$ 0.04} \\
\bottomrule
\end{tabular}
\end{small}
\end{center}
\end{table*}

\begin{table*}[t]
\caption{Results on PCMNIST with ground-truth group splits and varying label proportion deviation.}
\label{tab:verifylb}
\setlength\tabcolsep{5.2 pt}
\begin{center}
\begin{small}
\begin{tabular}{c|c|cc|cc|cc}
\toprule
{\multirow{2}{*}{\textbf{Method}}} & {\multirow{2}{*}{$p_{err}$}} & \multicolumn{2}{c|}{\textbf{ID}} & \multicolumn{2}{c|}{\textbf{Oracle}} & 
    \multicolumn{2}{c}{\textbf{TEV}} \\ 
    & & \textbf{Val}&\textbf{Test}&\textbf{Val}&\textbf{Test}&\textbf{Val}&\textbf{Test} \\
\midrule
 & 1 & 82.55 {\scriptsize $\pm$ 0.28} & 61.12 {\scriptsize $\pm$ 1.17} & 74.46 {\scriptsize $\pm$ 0.25} & 70.19 {\scriptsize $\pm$ 0.39} & 72.30 {\scriptsize $\pm$ 0.40} & 70.91 {\scriptsize $\pm$ 0.06} \\
SCILL{\scriptsize gt} & 1.2 & 84.70 {\scriptsize $\pm$ 0.09} & 59.40 {\scriptsize $\pm$ 0.42} & 79.19 {\scriptsize $\pm$ 0.12} & 65.44 {\scriptsize $\pm$ 0.83} & 77.53 {\scriptsize $\pm$ 0.01} & 63.43 {\scriptsize $\pm$ 0.22} \\
-IRM & 1.5 & 84.61 {\scriptsize $\pm$ 0.36} & 59.57 {\scriptsize $\pm$ 0.23} & 80.44 {\scriptsize $\pm$ 0.79} & 61.27 {\scriptsize $\pm$ 0.10} & 73.17 {\scriptsize $\pm$ 0.08} & 59.26 {\scriptsize $\pm$ 0.21} \\
 & 2 & 84.37 {\scriptsize $\pm$ 0.53} & 58.78 {\scriptsize $\pm$ 0.41} & 79.27 {\scriptsize $\pm$ 2.95} & 59.44 {\scriptsize $\pm$ 0.44} & 66.07 {\scriptsize $\pm$ 0.73} & 56.20 {\scriptsize $\pm$ 0.57} \\
\bottomrule
\end{tabular}
\end{small}
\end{center}
\end{table*}

\subsection{Empirical verification for the two criteria}
We conduct experiments on PC-MNIST to verify the significance of the two criteria for group-IL.

To verify the significance of the falsity exposure criterion, we compare the performance of methods under the case when label balance criterion is satisfied. On PC-MNIST, to exclude the effect of the noise in reference model in the group inference, we implement SCILL with the ground-truth spurious predictor, obtaining SCILL{\scriptsize gt} in Table~\ref{tab:verification}. The groups then satisfy the falsity exposure criterion. We construct the ground truth majority/minority split, which violates the falsity exposure, and experiment with IL methods, obtaining results in the row maj./min.. The significant performance drop of maj./min. compared with SCILL{\scriptsize gt} verifies the importance of falsity exposure for group-IL.\par

To verify the necessity of label balance criterion, we investigate the cases when the falsity exposure is satisfied. As SCILL{\scriptsize gt} on PC-MNIST satisfies the falsity exposure, we construct such cases by disturbing the label balancing weights in SCILL. We multiply the estimated label proportion of class 0 by different values $1/p_{err}$ to obtain different degrees of imbalance. As shown in Table~\ref{tab:verifylb}, label imbalance causes significant performance drop of SCILL-IRM, which verifies the impact of label balance.

We further show the importance of the instance reweight step in SCILL, which is designed following the label balance criterion. For this, we remove the instance reweight step in SCILL, obtaining SCILL{\scriptsize uw}. The experimental results in Table~\ref{aptab:pcmnist} show that SCILL{\scriptsize uw} performs worse than SCILL, demonstrating the importance of the instance reweight step in SCILL.

\subsection{Robustness analysis}
As shown in Section~5.1 in the main paper, the statistical-split algorithm contains a hyper-parameter $thr$. So We study the robustness of SCILL w.r.t. $thr$ by experiments on PC-MNIST with $thr$ set as $5, 10, 15, 20$ in SCILL-IRM. From the results shown in Table~\ref{aptab:robust}, the models are robust with different $thr=10,15,20$, though the model with $thr=5$ is worse than others.

\begin{figure}[!t]
  \begin{center}
    \includegraphics[scale=0.50]{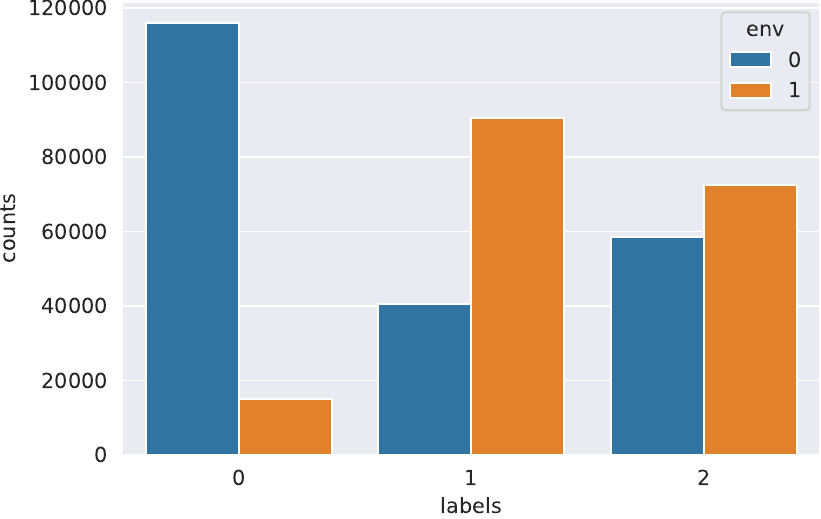}
    \caption{The label proportion varies between the two groups (denoted as $0$ and $1$ in the figure) inferred by EI on MNLI. The horizontal axis shows 3 labels on MNLI. The vertical axis shows the counts of the instance with the corresponding label in two groups.}
    \label{fig:ei-mnli}
  \end{center}
      \vspace{-15pt}
\end{figure}

\subsection{Label proportion of EI groups on MNLI}
Figure~\ref{fig:ei-mnli} shows the label proportion of the two groups inferred by the EI algorithm in EIIL. It can be seen that $\mathbb{P}(Y=0|g_0)/\mathbb{P}(Y=1|g_0) \neq \mathbb{P}(Y=0|g_1)/\mathbb{P}(Y=1|g_1)$. As a result, the label balance criterion is violated.

\subsection{Penalties}
We experiment with 4 kinds of invariant learning penalties: IRM~\cite{arjovsky2019invariant}, REx~\cite{krueger2021out}, cMMD~\citep{li2018deep,ahmed2021systematic}, and PGI~\cite{ahmed2021systematic}. 

We follow the notations in the main paper. The penalty of IRM is defined as
\begin{equation*}
    \textit{penalty}_{\textrm{IRM}} := \lVert \nabla_w \mathcal{R}^g(w \circ f) \rVert^2
\end{equation*}
where $\mathcal{R}^g$ denotes the expected risk on group $g$, $w$ is a constant scalar multiplier of 1.0 for each output dimension.

With the same notations, the penalty in V-REx writes as follows.
\begin{equation*}
    \textit{penalty}_{\textrm{REx}} := \textrm{Var}(\{\mathcal{R}^g(f)\}_{g \in \mathcal{G}})
\end{equation*}
where $\textrm{Var}(\cdot)$ denotes the variance.

Different from IRM and V-REx which enhance the invariance of feature conditioned label distribution, cMMD and PGI are two penalties to enhance the invariance of the label conditioned feature distribution across groups, i.e.
\begin{equation*}
    \mathbb{P}(f(X)|Y, g) = \mathbb{P}(f(X)|Y, g'), \forall g, g' \in \mathcal{G}.
\end{equation*}
The two penalties are shown to improve model's out-of-distribution generalization performance when used with EIIL in~\citep{ahmed2021systematic}. To show the availability of SCILL, we also experiments the two penalties with SCIIL.

In cMMD, the penalty is defined as the summation of the estimated MMD distances between each pair of conditional distributions, i.e.
\begin{align*}
    \textit{penalty}_{\textrm{cMMD}} &:= \sum_{g, g' \in \mathcal{G}} \sum_y \mathrm{\widehat{MMD}}(f(g_y), f(g'_y)) \\
    &= \sum_{g, g' \in \mathcal{G}} \sum_y \sum_{x \in g_y, x' \in g'_y} K(f(x), f(x)) + K(f(x'), f(x')) + 2K(f(x), f(x'))
\end{align*}
where $g_y := g \cap \{Y=y\}$, $K$ is a kernel function, which in our implementation is a mixture of 3 Gaussians with bandwidths [1, 5, 10], following~\citep{ahmed2021systematic}. We set $f(x)$ as the logarithm of the model's output probability, as advised in~\citep{veitch2021counterfactual}.

With the same notations, in PGI, the penalty is defined as
\begin{align*}
    \textit{penalty}_{\textrm{PGI}} := \sum_i d\left(\underset{x \sim \mathbb{P}^{g}, y=i}{\hat{\mathbb{E}}}\left[f(x)\right], \underset{x' \sim \mathbb{P}^{g'}, y'=i}{\hat{\mathbb{E}}}\left[ f(x') \right]\right) \\
    =\sum_{g, g' \in \mathcal{G}} \sum_y  \underset{x \in g_y}{\text{mean}}\left[f(x)_y\right] \log \frac{\text{mean}_{x' \in g'_y}\left[f(x')_y\right]}{\text{mean}_{x \in g_y}\left[f(x)_y\right]}.
\end{align*}
Here $f(x)$ is the probability estimation of the predictor, which follows~\citep{ahmed2021systematic}. $f(\cdot)_y$ denotes the component of $f(\cdot)$ on the dimension corresponding to class $y$.

\begin{figure}[!t]
  \begin{center}
    \subfigure[~\citep{arjovsky2019invariant,veitch2021counterfactual}]{
    \includegraphics[scale=0.38]{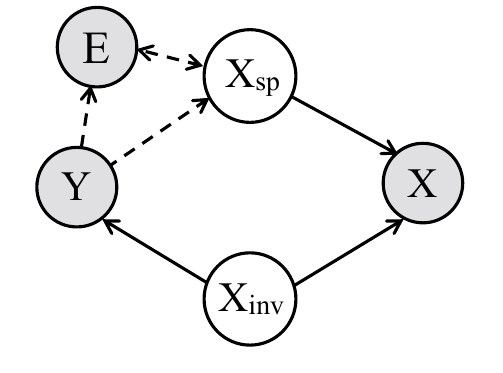}
    }
    \subfigure[~\citep{rosenfeld2020risks,makar2022causally,veitch2021counterfactual}]{
    \includegraphics[scale=0.38]{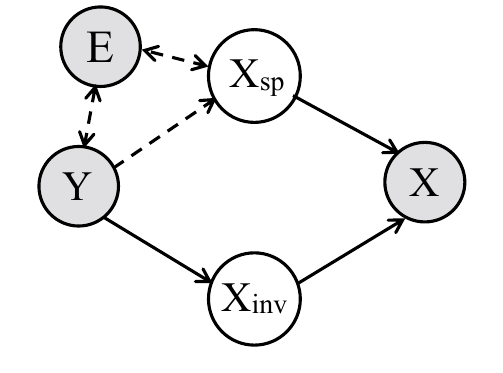}
    }
    \subfigure[~\citep{ahuja2021invariance,mahajan2021domain}]{
    \includegraphics[scale=0.38]{img/causal-graph-3.pdf}
    }
    \subfigure[~\citep{wald2021calibration,lu2022invariant}]{
    \includegraphics[scale=0.38]{img/causal-graph-4.pdf}
    }
    \caption{The causal graph depicting different assumptions on the data generating process in existing works (some are simplified). Shading indicates the variable is observed. Dotted arrow indicates possible causal relation. The spurious feature is anti-causal or correlates with $Y$ through $E$ in (a) and (b), confounded with the invariant feature in (c), and both anti-causal and confounded in (d).}
    \label{apfig:cgraph}
  \end{center}
\end{figure}

\section{Extended Discussions}
\paragraph{Assumptions in this paper.} In Section 3, we stated our assumptions on the data generating process as depicted by the causal graphs (a), (b) in Figure 1 of the main paper. In fact, our conclusions can be further extended to causal structures shown in Figure~\ref{apfig:cgraph} (a), (b). Compared with Figure~\ref{fig:cgraph}, Figure~\ref{apfig:cgraph} (a) further includes the case when $E$ is a child of both $X_{sp}$ and $Y$, depicting the case that $X_{sp}$ and $Y$ are subject to different \emph{selection} mechanisms in different domains, as introduced in~\citep{veitch2021counterfactual}. Figure~\ref{apfig:cgraph} (b) further includes 1) $E$ is a child of both $X_{sp}$ and $Y$; 2) $E$ is a confounder of $X_{sp}$ and $Y$. In all these cases, we have $X_{inv} \perp\!\!\!\!\perp X_{sp} | Y$. It is the only condition required in our proofs for theorems and statements in this paper, except for SFC, which needs $Y$ to be a backdoor variable between $X_{sp}$ and $X$. 

The conditional independence condition $X_{inv} \perp\!\!\!\!\perp X_{sp} | Y$ is an essential assumption in many related works on solving spurious correlations~\citep{veitch2021counterfactual,clark2019don,xiong2021uncertainty,puli2021predictive}. For example, it is required in the proof of conditions of Theorem 4.2 in~\citep{veitch2021counterfactual}. The nuisance-varying family defined in~\citep{puli2021predictive} satisfies that $p(x|x_b, y)$ keeps invariant, which is equivalent to $X_{inv} \perp\!\!\!\!\perp X_{sp} | Y$. It would be an important direction to find causal structures on which the assumption is not satisfied while group invariant learning can still be effective. For example, for the causal structure in Figure~\ref{apfig:cgraph} (c), group invariant learning may still handy when combined with an additional information bottleneck penalty~\citep{ahuja2021invariance}.

\paragraph{The algorithm SCIIL.} In this paper, the algorithm SCIIL is proposed as a possible but not necessarily optimal solution to meet the two criteria for group-IL. As this paper focuses on analyzing group invariant learning, comparing SCILL with other algorithms besides group-IL is beyond the scope of this paper. However, it can be observed that SCILL has some advantages compared with existing methods on solving spurious correlations.

Notably, the form of the objective of SCILL appears to be similar to those in two recent methods~\citep{makar2022causally,puli2021predictive}. They both contain a risk term reweighted by estimations of spurious correlations and an feature invariance penalty. However, they are only applied for the case when spurious features can be explicitly defined~\citep{puli2021predictive}, and are also discrete as assumed in~\citep{makar2022causally}. Also, their feature invariance penalty is different from that in IL. Specifically, \citet{makar2022causally} divide samples into groups according to their spurious feature, and define the pairwise MMD distance of the distributions of embeddings on these groups as the penalty.
It is equivalent to SCILL+cMMD when the spurious feature takes binary values. \citep{puli2021predictive} suppose the access of $X_{sp}$ and use a parameterized penalty term which approximates the mutual information $I[(f(X),Y)|X_{sp}]$. Instead, SCILL only assume the access of a reference model, which fits for more general cases when $X_{sp}$ is high dimensional or not predefined.

Compared with some other methods that exploiting a reference model~\citep{clark2019don,mahabadi2020end,utama2020mind,nam2020learning,liu2021just,xiong2021uncertainty}, the first term in SCIIL resembles their targets where samples are reweighted according to the outputs of the reference model. However, the IL penalty in SCIIL serves as an additional regularization. Results in Section~\ref{subsec:poe} empirically show SCILL outperforms methods in~\citep{clark2019don,utama2020mind} with the same reference model in out-of-distribution accuracy.

\section{Proofs}
\label{sec:prf}

This section contains the following proofs: \ref{prf:sec3} proof for the statement in Section 3 on the causal graph; \ref{prf:4.2} proof for Theorem 4.2; \ref{prf:inv} proof for the statement in Section 4.2 that SFC is sufficient for $f(X)$ to be invariant to the intervention on spurious features; \ref{prf:4.4} proof for Theorem 4.4; \ref{prf:cm} proof for the statement in Section 4.3; \ref{prf:prop} proof for Proposition 4.5; and \ref{prf:5.1} proof for Theorem 5.1.

\paragraph{Notations.} In the following contents, we denote that $(X, Y) \sim \mathbb{P}(\mathcal{X} \times \mathcal{Y})$. The image set of $X_{sp}$, $X_{inv}$ is respectively denoted as $\mathcal{B}, \mathcal{S}$. $X = r(X_{sp}, X_{inv})$, where $r$ is a bijective function. We denote $x_{sp}$, $x_{inv}$ as the corresponding values of $X_{sp}, X_{inv}$ for a given value $x \in \mathcal{X}$, i.e. $x = r(x_{sp}, x_{inv})$.
$\mathcal{G}$ denotes a set of sets in $\mathcal{X} \times \mathcal{Y}$ which satisfies $\cup_{g\in \mathcal{G}}g = \mathcal{X} \times \mathcal{Y}$. $\mathcal{G}^{\mathcal{Y}} := \{g \cap \{Y=y\}\}_{g \in \mathcal{G}, y\in \mathcal{Y}}$. As $f(x) = f(r(x_{sp}, x_{inv}))$, for convenience we abbreviate $f(x) = f(x_{sp}, x_{inv})$.

\subsection{Lemmas}
We first prove the following lemmas.
\begin{lemma}
\label{lemma:trans}
If a set $g \in \mathcal{X} \times \mathcal{Y}$ can be formed by a set of sets $\{g_i\}_{i \in \mathcal{I}} \subset \mathcal{G}^{\mathcal{Y}}$ under set union, then $\forall g', g'' \in \mathcal{G}$
\begin{equation*}
    \mathbb{P}(f(X)|g', Y=y) =\mathbb{P}(f(X)|g'', Y=y), \forall y
\end{equation*}
induces $\forall g' \in \mathcal{G}$
\begin{equation*}
    \mathbb{P}(f(X)|g, Y=y) =\mathbb{P}(f(X)|g', Y=y), \forall y
\end{equation*}
\end{lemma}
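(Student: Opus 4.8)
The statement says: if $g$ is a union of some collection $\{g_i\}_{i \in S}$ of cells from $\mathcal{G}^{\mathcal{Y}}$ (recall $\mathcal{G}^{\mathcal{Y}} = \{g \cap \{Y=y\}\}_{g \in \mathcal{G}, y \in \mathcal{Y}}$), and if the conditional distribution $\mathbb{P}(f(X) \mid g', Y=y)$ is the same for all $g' \in \mathcal{G}$ and each fixed $y$, then the same conditional distribution is also obtained when conditioning on $g$. The natural approach is to write $\mathbb{P}(f(X) \mid g, Y=y)$ as a mixture (convex combination) of the conditional laws $\mathbb{P}(f(X) \mid g_i, Y=y)$ over the pieces $g_i$ that contribute to the event $\{Y = y\}$, and then observe that since all those laws coincide with the common value $\mathbb{P}(f(X) \mid g', Y=y)$, the mixture collapses to that same value.

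\textbf{Key steps.} First I would fix $y \in \mathcal{Y}$ and restrict attention to those indices $i \in S$ for which $g_i \subseteq \{Y=y\}$; call this index subset $S_y$. Each such $g_i$ has the form $h_i \cap \{Y=y\}$ for some $h_i \in \mathcal{G}$, so by hypothesis $\mathbb{P}(f(X) \mid g_i, Y=y) = \mathbb{P}(f(X) \mid h_i, Y=y)$ equals the common value, which I will denote $\mu_y$, independent of $i$. Second, since the $g_i$ are disjoint (cells of $\mathcal{G}^{\mathcal{Y}}$ for distinct $(g,y)$ are disjoint, as $\mathcal{G}$ partitions and the $Y$-slices further partition), I can write, for any measurable set $A$ in the range of $f$,
\begin{equation*}
\mathbb{P}(f(X) \in A \mid g \cap \{Y=y\}) = \sum_{i \in S_y} \frac{\mathbb{P}(g_i)}{\mathbb{P}(g \cap \{Y=y\})}\, \mathbb{P}(f(X) \in A \mid g_i).
\end{equation*}
Third, each summand's conditional probability is $\mu_y(A)$, and the weights $\mathbb{P}(g_i)/\mathbb{P}(g \cap \{Y=y\})$ sum to $1$ because $g \cap \{Y=y\} = \bigcup_{i \in S_y} g_i$ is a disjoint union; hence the right-hand side equals $\mu_y(A)$. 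Finally, since $\mu_y = \mathbb{P}(f(X) \mid g', Y=y)$ for every $g' \in \mathcal{G}$, this gives $\mathbb{P}(f(X) \mid g, Y=y) = \mathbb{P}(f(X) \mid g', Y=y)$ for all $g' \in \mathcal{G}$ and all $y$, as desired.

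\textbf{Main obstacle.} The only real care needed is the bookkeeping around degeneracies: some $g_i$ may have zero probability, or $\mathbb{P}(g \cap \{Y=y\})$ may be zero for some $y$, in which case the conditional law is undefined and the claim is vacuous for that $y$. I would handle this by simply dropping null cells from $S_y$ and stating the identity only for $y$ with $\mathbb{P}(g \cap \{Y=y\}) > 0$, paralleling the ``non-zero probability'' caveats already used in the Label Balance criterion. A secondary subtlety is making explicit why a union of $\mathcal{G}^{\mathcal{Y}}$-cells, when intersected with $\{Y=y\}$, is again a union of such cells lying inside $\{Y=y\}$ — this is immediate since each cell is contained in exactly one $Y$-slice, so intersecting with $\{Y=y\}$ just discards the cells sitting in other slices. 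Neither of these is deep; the proof is essentially the ``mixture of identical distributions is that distribution'' argument.
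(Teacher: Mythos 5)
Your proof is correct and follows essentially the same route as the paper's: both write $\mathbb{P}(f(X)\mid g, Y=y)$ as a convex combination of the conditional laws on the constituent cells of $\mathcal{G}^{\mathcal{Y}}$ and observe that a mixture of identical distributions is that distribution. The only difference is cosmetic — you sum over all cells in the $Y=y$ slice at once and flag the null-probability degeneracies explicitly, whereas the paper reduces to the two-cell case; the underlying argument is the same.
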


\begin{proof}
We only need to prove the case when for any $y$, $\exists g_1, g_2 \in \mathcal{G}^{\mathcal{Y}}$, $g \cap \{Y=y\} = g_1 \cup g_2$. As
\begin{align*}
    \mathbb{P}(f(X)|g, Y=y)& = \mathbb{P}(f(X)|g_1, Y=y)\frac{\mathbb{P}(g_1, Y=y)}{\mathbb{P}(g, Y=y)}) + \mathbb{P}(f(X)|g_2, Y=y)\frac{\mathbb{P}(g_2, Y=y)}{\mathbb{P}(g, Y=y)})
\end{align*}
By $\mathbb{P}(f(X)|g_1, Y=y) = \mathbb{P}(f(X)|g_2, Y=y)$, $\mathbb{P}(g, Y=y) = \mathbb{P}(g_1, Y=y) + \mathbb{P}(g_2, Y=y)$, we have $\mathbb{P}(f(X)|g, Y=y) = \mathbb{P}(f(X)|g_1, Y=y) = \mathbb{P}(f(X)|g_2, Y=y)$.
\end{proof}

\begin{lemma}
\label{lemma:eic}
Suppose the following conditions are satisfied: \\
(a) $\mathbb{P}(Y=y|g)/\mathbb{P}(Y=y'|g) = \mathbb{P}(Y=y|g')/\mathbb{P}(Y=y'|g'), \forall g,g' \in \mathcal{G}$ and $\forall y, y' \in \mathcal{Y}$ satisfying $\mathbb{P}(Y=y|g), \mathbb{P}(Y=y'|g), \mathbb{P}(Y=y|g'), \mathbb{P}(Y=y'|g') \neq 0$. \\
(b) $\mathcal{G}$ only depends on $X_{sp}$, and $\forall g \in \mathcal{G}$, $\exists c_{g,y}$ s.t. $\mathbb{P}[X_{sp}=x_{sp},Y=y]=c_{g,y}, \forall x \in g, y \in \mathcal{Y}$.\\
(c) $f(X) \perp\!\!\!\!\perp X_{sp} |g$, and $f(X)$ differs with different $\mathbb{P}(Y|X_{inv})$ given $g$.\\
Then EIC induces SFC.
\end{lemma}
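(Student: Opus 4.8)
\textbf{Proof plan for Lemma~\ref{lemma:eic}.}
The plan is to show that under conditions (a)--(c), any predictor $f$ that satisfies EIC must have $f(X)\perp\!\!\!\!\perp X_{sp}\mid Y$, i.e.\ SFC. First I would unfold what EIC says once we restrict to the groups $\mathcal{G}$: for all $g,g'\in\mathcal{G}$, $\mathbb{P}(Y\mid f(X),g)=\mathbb{P}(Y\mid f(X),g')$. By Bayes' rule inside a fixed group $g$, $\mathbb{P}(Y=y\mid f(X)=v,g)\propto \mathbb{P}(f(X)=v\mid Y=y,g)\,\mathbb{P}(Y=y\mid g)$, so EIC becomes a statement relating the likelihoods $\mathbb{P}(f(X)=v\mid Y=y,g)$ across groups, twisted by the label-proportion factors $\mathbb{P}(Y=y\mid g)$. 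This is exactly where condition (a), the label balance criterion, earns its keep: it forces the ratios $\mathbb{P}(Y=y\mid g)/\mathbb{P}(Y=y'\mid g)$ to be independent of $g$, which lets me cancel the prior factors and conclude $\mathbb{P}(f(X)=v\mid Y=y,g)=\mathbb{P}(f(X)=v\mid Y=y,g')$ for all $g,g'$ sharing that label, i.e.\ $f(X)\perp\!\!\!\!\perp G\mid Y$.

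Next I would use condition (b) to pass from ``$f(X)$ is independent of the group label'' to ``$f(X)$ is independent of the actual spurious feature value $X_{sp}$.'' Because $\mathcal{G}$ depends only on $X_{sp}$, each group $g$ is a union of level sets $\{X_{sp}=b\}$; the extra clause in (b) (the homogeneity constant $c_{g,y}$) says that within a group all the spurious values $b$ carry the same joint mass with each label $y$, so conditioning on $g$ and on $\{X_{sp}=b\}$ inside $g$ give the same conditional law of $f(X)$ given $Y$. Here I would invoke Lemma~\ref{lemma:trans} (with the roles of $X_{sp}$-levels playing $\mathcal{G}^{\mathcal{Y}}$ and the groups being unions) to propagate the equality of $\mathbb{P}(f(X)\mid Y=y,\cdot)$ across all spurious-value strata, landing on SFC: $\mathbb{P}(f(X)\mid X_{sp}=b,Y)=\mathbb{P}(f(X)\mid X_{sp}=b',Y)$ for all $b,b'\in\mathcal{B}$.

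The clause in condition (c) that ``$f(X)$ differs with different $\mathbb{P}(Y\mid X_{inv})$ given $g$'' (and the within-group independence $f(X)\perp\!\!\!\!\perp X_{sp}\mid g$) is what I expect to need to rule out degenerate $f$: without it, a constant or $X_{sp}$-measurable $f$ could satisfy EIC vacuously and the implication to a \emph{genuinely} $X_{inv}$-based SFC predictor would be empty. Concretely I would use it to argue that the common conditional distribution of $f(X)$ given $Y$ is realized through $X_{inv}$ alone, so the SFC statement is nontrivial.

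The main obstacle I anticipate is the bookkeeping in the second step: condition (b) as stated is somewhat delicate (it asserts a shared constant $c_{g,y}$ across all $x\in g$ for each $y$, not merely that $\mathcal{G}$ is $X_{sp}$-measurable), and I would need to be careful that the support conditions in (a) — the nonvanishing of $\mathbb{P}(Y=y\mid g)$ — line up with the strata actually appearing inside each group, so that Lemma~\ref{lemma:trans}'s union hypothesis genuinely applies. Handling zero-probability label–group combinations cleanly, and making sure the ``$f(X)$ differs with different $\mathbb{P}(Y\mid X_{inv})$'' clause is used in a mathematically precise (not just heuristic) way, are the two places where the argument needs the most care.
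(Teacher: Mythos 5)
There are two genuine gaps in your plan, and both sit at the load-bearing points of the argument.

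First, the Bayes-cancellation step does not deliver what you claim. Writing $q_y^g(v):=\mathbb{P}(f(X)=v\mid Y=y,g)$ and $\pi_y:=\mathbb{P}(Y=y\mid g)$ (equal across groups by condition (a)), EIC gives
\begin{equation*}
\frac{q_y^g(v)\,\pi_y}{\sum_{y''}q_{y''}^g(v)\,\pi_{y''}}=\frac{q_y^{g'}(v)\,\pi_y}{\sum_{y''}q_{y''}^{g'}(v)\,\pi_{y''}},
\end{equation*}
i.e.\ $q_y^g(v)=\lambda(v)\,q_y^{g'}(v)$ with $\lambda(v)=\mathbb{P}(f(X)=v\mid g)/\mathbb{P}(f(X)=v\mid g')$, a factor that is independent of $y$ but \emph{not} forced to equal $1$. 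Normalization only yields $\sum_v(\lambda(v)-1)q_y^{g'}(v)=0$ for each $y$, which does not pin $\lambda\equiv 1$ once $f$ takes more than two values (one can write down explicit non-degenerate $q$'s with $\lambda=(2,1,\tfrac12)$, say). So EIC plus label balance alone does not give $f(X)\perp\!\!\!\!\perp G\mid Y$; this is consistent with the paper's Theorem~4.4, which only establishes label balance as \emph{necessary}. The paper's proof never takes this route: it works with the level sets $\mathcal{S}_{\mathbf{a}}^b=\{s: f(r(s,b))=\mathbf{a}\}$, uses the ambient assumption $X_{inv}\perp\!\!\!\!\perp X_{sp}\mid Y$ together with condition (b) to express $\mathbb{P}(Y=y\mid f(X)=\mathbf{a},g)$ in terms of $\mathbb{P}(Y=y\mid X_{inv}\in\mathcal{S}_{\mathbf{a}}^g)$, and only then applies EIC.

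Second, your descent from groups to individual spurious values runs Lemma~\ref{lemma:trans} backwards. That lemma says a union of strata on which the conditional laws agree inherits that common law (a mixture of identical distributions); it cannot tell you that the components of equal mixtures are themselves equal, which is what passing from $\mathbb{P}(f(X)\mid Y,g)$ to $\mathbb{P}(f(X)\mid Y,X_{sp}=b)$ requires. In the paper this descent is exactly what condition (c) supplies: $f(X)\perp\!\!\!\!\perp X_{sp}\mid g$ together with the requirement that $f$ is determined by $\mathbb{P}(Y\mid X_{inv})$ forces $\mathcal{S}_{\mathbf{a}}^b=\mathcal{S}_{\mathbf{a}}^{b'}$ for all $b,b'$ in the same group, so the within-group statement already holds stratum-by-stratum, and EIC then propagates it across groups and ultimately to all of $\mathcal{B}$. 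Treating (c) as a mere non-degeneracy safeguard therefore misreads its role; without it the within-group heterogeneity of $f(X)$ over $X_{sp}$ is uncontrolled and SFC cannot be reached.
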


\begin{proof}
Denote $\mathcal{S}_{\mathbf{a}}^b := \{s \in \mathcal{S}| f(r(s, b)) = \mathbf{a}\}$, $\mathcal{B}_g := \{b \in \mathcal{B}|  \exists s, r(s, b) \in g\}$. 
As $\forall b, b' \in \mathcal{B}_b, \mathbb{P}(X_{sp}=b,Y=y) = \mathbb{P}(X_{sp}=b',Y=y), \forall y$, we have $\mathbb{P}(X_{sp}=b) = \mathbb{P}(X_{sp}=b')$. Then $\mathbb{P}(Y=y|X_{sp}=b) = \mathbb{P}(Y=y|X_{sp}=b')$, $\forall y$. As
\begin{equation*}
    \mathbb{P}(Y=y|g) = \sum_{b \in \mathcal{B}_g}\mathbb{P}(Y=y|X_{sp}=b)\mathbb{P}(X_{sp}=b|g) = \mathbb{P}(Y=y|X_{sp}=b).
\end{equation*}
Suppose $\forall y, b, \mathbb{P}(Y=y|X_{sp}=b) \neq 0$. Then we have $\forall b, b' \in \mathcal{B}$,
\begin{equation*}
    \mathbb{P}(Y=y|X_{sp}=b) = \mathbb{P}(Y=y|X_{sp}=b') = \mathbb{P}(Y=y), \forall y.
\end{equation*}
Now
\begin{align*}
    \mathbb{P}(Y=y|f(X)=\mathbf{a},g) &= \mathbb{P}(Y=y|\cup_{b \in \mathcal{B}_g} \{X_{inv} \in \mathcal{S}_{\mathbf{a}}^b, X_{sp}=b\}) \\
    &= \frac{\sum_{b \in \mathcal{B}_g} \mathbb{P}(Y=y, X_{inv} \in \mathcal{S}_{\mathbf{a}}^b, X_{sp}=b)}{\sum_{b \in \mathcal{B}_g} \mathbb{P}(X_{inv} \in \mathcal{S}_{\mathbf{a}}^b, X_{sp}=b)}.
\end{align*}
As $X_{inv} \perp\!\!\!\!\perp X_{sp} |Y$, we have 
\begin{align*}
    \mathbb{P}(Y=y, X_{inv} \in \mathcal{S}_{\mathbf{a}}^b, X_{sp}=b) &= \mathbb{P}(X_{inv} \in \mathcal{S}_{\mathbf{a}}^b, X_{sp}=b|Y=y)\mathbb{P}(Y=y) \\
    &= \mathbb{P}(X_{inv} \in \mathcal{S}_{\mathbf{a}}^b|Y=y) \mathbb{P}(X_{sp}=b, Y=y)
\end{align*}
As a result,
\begin{align*}
    \mathbb{P}(Y=y|f(X)=\mathbf{a},g) &= \frac{\sum_{b \in \mathcal{B}_g} \mathbb{P}(X_{inv} \in \mathcal{S}_{\mathbf{a}}^b|Y=y) \mathbb{P}(X_{sp}=b, Y=y)}{\sum_{b \in \mathcal{B}_g} \mathbb{P}(X_{inv} \in \mathcal{S}_{\mathbf{a}}^b, X_{sp}=b)} \\
    &= \frac{\mathbb{P}(Y=y,X_{sp}=b_g) \sum_{b \in \mathcal{B}_g} \mathbb{P}(X_{inv} \in \mathcal{S}_{\mathbf{a}}^b|Y=y)}{\sum_{b \in \mathcal{B}_g} \mathbb{P}(X_{inv} \in \mathcal{S}_{\mathbf{a}}^b,X_{sp}=b)} \\
    &= \frac{\mathbb{P}(Y=y,X_{sp}=b_g) \mathbb{P}(X_{inv} \in \mathcal{S}_{\mathbf{a}}^g|Y=y)}{\sum_{b \in \mathcal{B}_g} \mathbb{P}(X_{inv} \in \mathcal{S}_{\mathbf{a}}^b,X_{sp}=b)}
\end{align*}
where $b_g$ is any element in $\mathcal{B}_g$, $\mathcal{S}_{\mathbf{a}}^g := \cup_{b \in \mathcal{B}_g} \mathcal{S}_{\mathbf{a}}^b$.
Note that condition (c) induces $\mathcal{S}_{\mathbf{a}}^b = \mathcal{S}_{\mathbf{a}}^{b'} = \mathcal{S}_{\mathbf{a}}^g$,
$\forall b, b' \in g$, and the condition (b) induces $X_{sp} \perp\!\!\!\!\perp X_{inv}|g$. We have
\begin{align*}
    \mathbb{P}(Y=y|f(X)=\mathbf{a},g) &=  \frac{\mathbb{P}(Y=y,X_{sp}=b_g) \mathbb{P}(X_{inv} \in \mathcal{S}_{\mathbf{a}}^g|Y=y)}{ \mathbb{P}(X_{inv} \in \mathcal{S}_{\mathbf{a}}^g) \mathbb{P}(g)} \\
    &= \frac{\mathbb{P}(Y=y,X_{sp}=b_g) \mathbb{P}(Y=y|X_{inv} \in \mathcal{S}_{\mathbf{a}}^g)}{ \mathbb{P}(Y=y)\mathbb{P}(g)}\\
    &= \frac{\mathbb{P}(X_{sp}=b_g)}{\mathbb{P}(g)}\mathbb{P}(Y=y|X_{inv} \in \mathcal{S}_{\mathbf{a}}^g)
\end{align*}
We have
\begin{equation*}
    \frac{\mathbb{P}(X_{sp}=b_g)}{\mathbb{P}(g)} = \frac{\mathbb{P}(X_{sp}=b_{g'})}{\mathbb{P}(g')},  \mathbb{P}(Y=y|X_{inv} \in \mathcal{S}_{\mathbf{a}}^g) =  \mathbb{P}(Y=y|X_{inv} \in \mathcal{S}_{\mathbf{a}}^{g'})
\end{equation*}
As $f(X) \perp\!\!\!\!\perp X_{sp} |g$, we have $\mathcal{S}_{\mathbf{a}}^b = \mathcal{S}_{\mathbf{a}}^g$, $\forall b \in g$. Then we have $\forall b, b' \in \mathcal{B}$, $\mathbb{P}(Y=y|X_{inv} \in \mathcal{S}_{\mathbf{a}}^b) =  \mathbb{P}(Y=y|X_{inv} \in \mathcal{S}_{\mathbf{a}}^{b'})$. As $\mathbb{P}(Y|X_{inv}=s)$ is constant, $\forall s \in \mathcal{S}_{\mathbf{a}}^b$, we have $f(X) \perp\!\!\!\!\perp X_{sp}$. As a result $\mathcal{S}_{\mathbf{a}}^b = \mathcal{S}_{\mathbf{a}}$. As
\begin{align*}
    \mathbb{P}(f(X)=\mathbf{a}|g, Y=y) &= \sum_b \mathbb{P}(X_{inv} \in \mathcal{S}_{\mathbf{a}}^b|X_{sp}=b,Y=y)\mathbb{P}(X_{sp}=b|g, Y=y) \\
    &= \frac{\mathbb{P}(X_{sp}=b_g)}{\mathbb{P}(g)}\mathbb{P}(X_{inv} \in \mathcal{S}_{\mathbf{a}}|Y=y)
\end{align*}
We have
\begin{equation*}
    \mathbb{P}(f(X)=\mathbf{a}|g, Y=y) = \mathbb{P}(f(X)=\mathbf{a}|g', Y=y).
\end{equation*}
And $\mathbb{P}(f(X)=\mathbf{a}|X_{sp}=b, Y=y) = \mathbb{P}(X_{inv} \in \mathcal{S}_{\mathbf{a}}|Y=y) = \mathbb{P}(f(X)=\mathbf{a}|X_{sp}=b', Y=y)$, i.e. SFC is satisfied.
\end{proof}

\subsection{Proof for the statement in Section 3}
\label{prf:sec3}
\paragraph{The statement in Section 3.} When the causal model of the data generating process follows the causal graph in Figure 1(d) in the main paper, whether the invariant mechanism holds on each group is indeterminate without additional assumptions on the mechanisms between $X_{sp}$ and $X_{inv}$.\par
\begin{proof}
Consider $\mathbb{P}(Y|X_{sp}, X_{inv})$, we have
\begin{align*}
    \mathbb{P}(Y|X_{sp}, X_{inv}) = \frac{\mathbb{P}(X_{sp}|Y, X_{inv})\mathbb{P}(Y| X_{inv})}{\mathbb{P}(X_{sp}| X_{inv})} \propto \mathbb{P}(X_{sp}|Y, X_{inv})\mathbb{P}(Y| X_{inv})
\end{align*}
It shows that the relation between $\mathbb{P}(Y|X_{sp}, X_{inv})$ and $\mathbb{P}(Y|X_{inv})$ is affected by $\mathbb{P}(X_{sp}|Y, X_{inv})$. 
However,
\begin{align*}
    \mathbb{P}(X_{sp}|Y, X_{inv}) = \sum_{e \in \mathcal{E}_{all}} \mathbb{P}(X_{sp}|Y, E)\mathbb{P}(E|Y, X_{inv})
\end{align*}
As the mechanisms between $Y$, $E$ and $X_{sp}$, and between $E$ and $X_{inv}$ are unknown, so does $\mathbb{P}(X_{sp}|Y, X_{inv})$. As a result, the relation between $\mathbb{P}(Y|X_{inv}, X_{sp})$ and $\mathbb{P}(Y|X_{inv})$ is indeterminate. As $g \in \mathcal{G}$ is an event in $\sigma(X_{sp}, Y)$, we have the relation between $\mathbb{P}(Y|X_{inv}, g)$ and $\mathbb{P}(Y|X_{inv})$ is indeterminate.
\end{proof}

\subsection{Proof for Theorem~4.2}
\label{prf:4.2}
Theorem 4.2 in the main paper states as follows.
\begin{theorem}
Suppose the falsity exposure criterion is violated, i.e. $\exists h$ satisfies $\mathbb{P}(Y|h(X_{sp}), g) = \mathbb{P}(Y|h(X_{sp}), g')\! \neq \! \mathbb{P}(Y), \forall g, g'\!\! \in \! \mathcal{G}$. Then the optimal solution of group-IL is $f(X)\!=\!\mathbb{P}[Y|X_{inv},h(X_{sp})]$, which fails to generalize when $\mathbb{P}(Y|X_{sp})$ shifts.
\end{theorem}
\begin{proof}
We first prove that $\Phi(X) = (X_{inv},h(X_{sp}))$ satisfies EIC. As $X_{inv}$ and $X_{sp}$ are conditionally independent given $Y$, and groups are only defined by $X_{sp}$ and $Y$, we have $\forall g, g'\!\! \in \! \mathcal{G}$,
\begin{align*}
    \mathbb{P}[Y|X_{inv},h(X_{sp}),g] &= \frac{\mathbb{P}[X_{inv},h(X_{sp}),g|Y]\mathbb{P}(Y)}{\mathbb{P}[X_{inv},h(X_{sp}),g]}
    = \frac{\mathbb{P}[X_{inv}|Y]\mathbb{P}[h(X_{sp}),g|Y]\mathbb{P}(Y)}{\mathbb{P}[X_{inv},h(X_{sp})]} \\
    &= \frac{\mathbb{P}[Y,X_{inv}]\mathbb{P}[Y|h(X_{sp}),g]\mathbb{P}[h(X_{sp}),g]}{\mathbb{P}[X_{inv},h(X_{sp}),g]\mathbb{P}(Y)} \propto \frac{\mathbb{P}[Y|X_{inv}]\mathbb{P}[Y|h(X_{sp}),g]}{\mathbb{P}(Y)}
\end{align*}
As a result, $\mathbb{P}[Y|X_{inv},h(X_{sp}),g] = \mathbb{P}[Y|X_{inv},h(X_{sp}),g']=\mathbb{P}[Y|X_{inv},h(X_{sp})]$, and $\mathbb{P}[Y|X_{inv},h(X_{sp})] \neq \mathbb{P}[Y|X_{inv}]$. Without the loss of generality, we suppose any other $h'$ which satisfies $\mathbb{P}(Y|h(X_{sp}), g) = \mathbb{P}(Y|h(X_{sp}), g')\! \neq \! \mathbb{P}(Y), \forall g, g'\!\! \in \! \mathcal{G}$ is a function of $h$, i.e. $\exists l$ s.t. $h'(x) = l(h(x))$. In the objective function of group-IL, the optimal predictor is optimized with the cross-entropy loss. By the Jensen-Inequality, among all the functions of $\Phi(X)$, $\mathbb{P}[Y|X_{inv},h(X_{sp})]$ minimizes the loss. When $\mathbb{P}(Y|X_{sp})$ encounters arbitrary changes, so does $\mathbb{P}(Y|h(X_{sp}))$. As $\mathbb{P}[Y|X_{inv},h(X_{sp})]$ is propositional to $\mathbb{P}(Y|h(X_{sp}))$, it can also change in a new domain.
\end{proof}

\subsection{Proof for the sufficiency of SFC}
\label{prf:inv}
\paragraph{The statement in Section 4.2.} SFC is a sufficient condition for a function $f(X)$ to be invariant to the intervention~\citep{pearl2009causality} on $X_{sp}$. 

\begin{proof}
Specifically, the condition "$f(X)$ is invariant to the intervention on $X_{sp}$" writes as
\begin{equation*}
    \mathbb{P}(f(X)|do(X_{sp}\!=b)) = \mathbb{P}(f(X)|do(X_{sp}\!=b')), \forall b,b' \in \mathcal{B}.
\end{equation*}
Equivalently, we can say $X_{sp}$ has no causal effects on $f(X)$. We consider the causal structures (a) and (b) shown in Figure~\ref{apfig:cgraph}. In both graphs, $Y$ is a backdoor variable from $X_{sp}$ to $X$, as it blocks all the backdoor path from $X_{sp}$ to $X$ with an arrow into $X_{sp}$, and it is not a child of $X_{sp}$ (note that, in (b), we assume the arrows between $(E, Y)$ points to $Y$ only when $E$ is the con-founder of $Y$ and $X_{sp}$). Then by the Back-door criterion~\citep{pearl2009causality},
\begin{equation*}
    \mathbb{P}(X|do(X_{sp}\!=b))\!=\!\sum_y \mathbb{P}(X|X_{sp}\!=b, Y\!=y)\mathbb{P}(Y\!=y).
\end{equation*}
As a result, for any function $f$,
\begin{equation*}
    \mathbb{P}(f(X)|do(X_{sp}=b)) = \sum_y \mathbb{P}(f(X)|X_{sp} =b, Y =y)\mathbb{P}(Y=y).
\end{equation*}
It is straightforward that when 
\begin{equation}
    \mathbb{P}(f(X)|X_{sp}=b, Y) =  \mathbb{P}(f(X)|X_{sp}=b', Y), \forall b,b' \in \mathcal{B}, \tag{SFC}
\end{equation}
we have $\forall b,b' \in \mathcal{B}$,
\begin{equation*}
    \mathbb{P}(f(X)|do(X_{sp}\!=b)) = \sum_y \mathbb{P}(f(X)|X_{sp} =b', Y =y)\mathbb{P}(Y=y) = \mathbb{P}(f(X)|do(X_{sp}\!=b')).
\end{equation*}
That ends our proof.
\end{proof}

\subsection{Proof for Theorem 4.4}
\label{prf:4.4}
We repeat Theorem 4.4 here.
\begin{theorem}
With a set of groups $\mathcal{G}$ inferred by $(X_{sp},Y)$, if the label balance criterion is violated, functions satisfying EIC can not satisfy SFC.
\end{theorem}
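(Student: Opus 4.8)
The plan is to prove the contrapositive: assume that some function $f$ (viewed as a feature map, so that EIC reads $\mathbb{P}(Y\mid f(X),g)=\mathbb{P}(Y\mid f(X),g')$ for all $g,g'\in\mathcal{G}$) satisfies \emph{both} EIC and SFC, and show that the label balance criterion must then hold, contradicting its assumed violation. The only structural facts I will use are that $\mathcal{G}$ is inferred by $(X_{sp},Y)$ --- so conditioning on $Y=y$ and on a group $g$ is the same as conditioning on $Y=y$ together with $X_{sp}$ lying in some subset $\mathcal{B}_{g,y}\subseteq\mathcal{B}$ --- and the definitions of EIC and SFC stated above.

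First I would show that SFC forces the within-group, within-label law of $f(X)$ to be independent of the group. Using the $(X_{sp},Y)$-measurability of $G$ and the law of total probability over the $X_{sp}$-values in the cell $\mathcal{B}_{g,y}$,
\[
\mathbb{P}(f(X)=\mathbf{a}\mid Y=y,g)=\sum_{b\in\mathcal{B}_{g,y}}\mathbb{P}(f(X)=\mathbf{a}\mid X_{sp}=b,Y=y)\,\mathbb{P}(X_{sp}=b\mid Y=y,g).
\]
By SFC the factor $\mathbb{P}(f(X)=\mathbf{a}\mid X_{sp}=b,Y=y)$ is the same for every $b$, equal to the common value $q_{\mathbf{a},y}:=\mathbb{P}(f(X)=\mathbf{a}\mid Y=y)$; since the remaining weights sum to one this gives $\mathbb{P}(f(X)=\mathbf{a}\mid Y=y,g)=q_{\mathbf{a},y}$, a quantity with no $g$-dependence.

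Next I would combine this with Bayes' rule. Writing $\pi_g(y):=\mathbb{P}(Y=y\mid g)$,
\[
\mathbb{P}(Y=y\mid f(X)=\mathbf{a},g)=\frac{q_{\mathbf{a},y}\,\pi_g(y)}{\sum_{y''}q_{\mathbf{a},y''}\,\pi_g(y'')},
\]
so for any pair of labels $y,y'$ (with the relevant probabilities positive),
\[
\frac{\mathbb{P}(Y=y\mid f(X)=\mathbf{a},g)}{\mathbb{P}(Y=y'\mid f(X)=\mathbf{a},g)}=\frac{q_{\mathbf{a},y}}{q_{\mathbf{a},y'}}\cdot\frac{\pi_g(y)}{\pi_g(y')}.
\]
EIC makes the left-hand side independent of $g$, while the first factor on the right is already $g$-free; hence $\pi_g(y)/\pi_g(y')$ must be constant over $g$, which is precisely the label balance criterion. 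I would carry out the final cancellation explicitly in the binary case $\mathcal{Y}=\{0,1\}$ (choosing any output value $\mathbf{a}$ with $q_{\mathbf{a},0},q_{\mathbf{a},1}>0$ and simplifying the equality of posteriors to $\pi_g=\pi_{g'}$), and note the multiclass argument is identical term by term.

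The step I expect to be the main obstacle is the bookkeeping around supports rather than any deep idea: the cancellation in the last display is legitimate only once I exhibit an output value $\mathbf{a}$ of $f(X)$ for which $q_{\mathbf{a},y}$ and $q_{\mathbf{a},y'}$ are \emph{both} strictly positive (the nonvanishing of $\pi_g(\cdot)$ and $\pi_{g'}(\cdot)$ being already part of the criterion's hypotheses). This can fail only if $f(X)$ perfectly separates the labels $y$ and $y'$ --- a degenerate situation ruled out in the settings we care about, e.g. any dataset with label noise, where no predictor separates the classes --- and I would dispatch it with a short remark. The only other point requiring care is the total-probability decomposition in the first step, namely that conditioned on $Y=y$ the event ``$\in g$'' is a statement purely about $X_{sp}$; this is immediate from the assumption that $\mathcal{G}$ is inferred by $(X_{sp},Y)$.
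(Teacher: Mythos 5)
Your proposal is correct and follows essentially the same route as the paper's proof: prove the contrapositive by showing that SFC together with the $(X_{sp},Y)$-measurability of the groups forces $\mathbb{P}(f(X)=\mathbf{a}\mid Y=y,g)$ to be independent of $g$, and then apply Bayes' rule with EIC to conclude that the label ratios $\mathbb{P}(Y=y\mid g)/\mathbb{P}(Y=y'\mid g)$ cannot depend on $g$. Your explicit remark about needing an output value $\mathbf{a}$ with $q_{\mathbf{a},y},q_{\mathbf{a},y'}>0$ addresses a cancellation that the paper performs only implicitly, so it is a welcome (if minor) refinement rather than a deviation.
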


\begin{proof}
Suppose a function $f$ satisfies EIC, i.e.
\begin{equation}
    \mathbb{P}(Y|f(X)=\mathbf{a}, g) = \mathbb{P}(Y|f(X)=\mathbf{a}, g'), \forall g,g' \in \mathcal{G}. \tag{EIC}
    \label{eq:apeic}
\end{equation}
where $\mathcal{G}$ is defined by some function $h_G$ of $(X_{sp}, Y)$, i.e. $\forall g \in \mathcal{G}$, $g := \{(x, y)|h_G(x_{sp}, y) \in S_g\}$ for some set $S_g$. Note that here we do not distinguish whether $f$ is the predictor or the feature extractor $\Phi$, because the two has no clear theoretical distinction. \par
Recall that SFC is stated as
\begin{align}
    \mathbb{P}(f(X)|X_{sp}=b, Y=y) = \mathbb{P}(f(X)|X_{sp}=b', Y=y), \forall b,b' \in \mathcal{B}. \tag{SFC}
    \label{eq:apBIC}
\end{align}

Suppose $f$ also satisfies SFC. Define $S_g^B := \{b \in \mathcal{B}| \exists y, h_G(b,y) \in S_g\}$, we have
\begin{equation*}
    \mathbb{P}(f(X)=\mathbf{a}|g, Y=y) = \mathbb{P}(f(X)=\mathbf{a}|X_{sp} \in S_g^B, Y=y) = \mathbb{P}(f(X)=\mathbf{a}|X_{sp}=b, Y=y),
\end{equation*}
we have for $\forall y \in \mathcal{Y}, g, g' \in \mathcal{G}$, satisfying $\mathbb{P}(g, Y=y) \neq 0, \mathbb{P}(g', Y=y) \neq 0$,
\begin{equation*}
    \mathbb{P}(f(X)=\mathbf{a}|g, Y=y) =\mathbb{P}(f(X)=\mathbf{a}|g', Y=y), \forall y
\end{equation*}
by EIC, we have
\begin{equation*}
    \frac{\mathbb{P}(Y=y|g)}{\mathbb{P}(Y=y|g')} = \frac{\mathbb{P}(f(X)=\mathbf{a}|g)}{\mathbb{P}(f(X)=\mathbf{a}|g')}, \forall y.
\end{equation*}
Then for another $y' \in \mathcal{Y}$ satisfying $\mathbb{P}(g, Y=y') \neq 0, \mathbb{P}(g', Y=y') \neq 0$,
\begin{equation*}
    \frac{\mathbb{P}(Y=y|g)}{\mathbb{P}(Y=y'|g)} = \frac{\mathbb{P}(Y=y|g')}{\mathbb{P}(Y=y'|g')}.
\end{equation*}
As a result, if $f$ satisfies SFC, the above condition must be satisfied.
\end{proof}

\subsection{Proof for the statement in Section 4.3}
\label{prf:cm}
\paragraph{The statement in Section 4.3.} On both colored-MNIST~\citep{arjovsky2019invariant} and coloured-MNIST~\citep{ahmed2021systematic}, $Y$ has a uniform distribution, and the spurious correlation has the same ratio for any spurious features, e.g. $\mathbb{P}(Y=0|\text{color}=\textit{green})=\mathbb{P}(Y=1|\text{color}=\textit{red}))$ on colored-MNIST. It can be proved that in this case the majority/minority groups satisfy both criteria.
\begin{proof}
Denote 
\begin{equation*}
\mathbb{P}(Y=0|\text{color}=\textit{green})=\mathbb{P}(Y=1|\text{color}=\textit{red})) = p > 0.5
\end{equation*}
As $\mathbb{P}(Y=0) = \mathbb{P}(Y=1)$, we have
\begin{equation*}
    \mathbb{P}(\text{color}|Y) = \frac{\mathbb{P}(Y|\text{color})\mathbb{P}(\text{color})}{\mathbb{P}(Y)} \propto \mathbb{P}(Y|\text{color})
\end{equation*}
then $\mathbb{P}(\text{color}=\textit{green}|Y=0) =  \mathbb{P}(\text{color}=\textit{red}|Y=1) = p$. The majority group $g_{maj}$, as proved in Proposition 1 in~\citep{creager21environment}, consists of $\{\text{color}=\textit{green}, Y=0\}$ and $\{\text{color}=\textit{red}, Y=1\}$. As a result,
\begin{equation*}
    \frac{\mathbb{P}(Y=0|g_{maj})}{\mathbb{P}(Y=1|g_{maj})} = \frac{\mathbb{P}(\text{color}=\textit{green}, Y=0)}{\mathbb{P}(\text{color}=\textit{red}, Y=1)} = 1
\end{equation*}
Similarly, for the minority group $g_{min}$,
\begin{equation*}
    \frac{\mathbb{P}(Y=0|g_{maj})}{\mathbb{P}(Y=1|g_{maj})} = \frac{\mathbb{P}(\text{color}=\textit{red}, Y=0)}{\mathbb{P}(\text{color}=\textit{green}, Y=1)} = \frac{1-p}{1-p} = 1
\end{equation*}
The label-balance criterion is thus satisfied. As any function $h$ of the color feature satisfies $\mathbb{P}(Y=0|h, g_{maj}) = \mathbb{P}(Y=0|g_{maj}) = \mathbb{P}(Y=0)$, the falsity exposure criterion is satisfied.
\end{proof}

\subsection{Proof for Proposition 4.5}
\label{prf:prop}
The proposition states as follows.
\begin{proposition}
Suppose we have $(X, Y) \sim \mathbb{P}(X, Y)$. $Y$ takes value in $\{0, 1\}$. $X$ is formed with spurious feature variables $X_{sp}=(B_0, B_1)$, and invariant feature variable $S$, i.e. $X = 
r(B_0, B_1, S)$, for some injective function $r$. $B_0$ and $B_1$ are both binary variables, which take values in $\{b_0^0, b_0^1\}$ and $\{b_1^0, b_1^1\}$ respectively. $B_0$, $B_1$ and $S$ are conditionally independent given $Y$. Denote $\mathbb{P}(Y=j|B_i=b_i^j) = p_i, \forall j=0,1$. Suppose $p_0 > p_1$. Then we have
1) the majority/minority split $e_{maj}, e_{min}$ violates falsity exposure criterion.
2) the optimal classifier under invariant learning objectives depends on $B_1$.
\end{proposition}

\begin{proof}
Without the loss of generality, we suppose $\mathbb{P}(Y=0)=\mathbb{P}(Y=1)$.
Denote $B_i$ takes value in $\{b_i^0, b_i^1\}$. We have $\mathbb{P}(Y=j|B_i=b_i^j) = p_i$, $i=0,1$, $j=0,1$.
As $B_i$ are conditionally independent given $Y$, we have $\mathbb{P}(Y|B_0, B_1) \propto \mathbb{P}(Y|B_1)\mathbb{P}(Y|B_0)$.
As $p_0 > p_1$, we have
\begin{align*}
    \mathbb{P}(Y=0|B_0=b_0^0, B_1=b_1^1) &> \mathbb{P}(Y=1|B_0=b_0^0, B_1=b_1^1) \\
    \mathbb{P}(Y=1|B_0=b_0^1, B_1=b_1^0) &> \mathbb{P}(Y=0|B_0=b_0^0, B_1=b_1^1)
\end{align*}
The majority group $e_{maj}$ then consists of the following data sets:
\begin{align*}
    \{B_0=b_0^0, B_1=b_1^0, Y=0\}, \{B_0=b_0^0, B_1=b_1^1, Y=0\},\\
    \{B_0=b_0^1, B_1=b_1^0, Y=1\}, \{B_0=b_0^1, B_1=b_1^1, Y=1\}
\end{align*}
In the majority set,
\begin{align*}
    \mathbb{P}(Y=0|B_1=b_1^0, e_{maj}) &= \frac{\mathbb{P}(Y=0, B_1=b_1^0, B_0=b_0^0)}{\mathbb{P}(Y=1, B_1=b_1^0, B_0=b_1^1)} \\
    &\propto \frac{\mathbb{P}(Y=0, B_1=b_1^0)}{\mathbb{P}(Y=1, B_1=b_1^1)} = \frac{\mathbb{P}(Y=0|B_1=b_1^0)}{\mathbb{P}(Y=1|B_1=b_1^1)}
\end{align*}
As a result, $\mathbb{P}(Y=0|B_1=b_1^0, e_{maj}) = \mathbb{P}(Y=0|B_1=b_1^0)$. Similarly,
\begin{align*}
    \mathbb{P}(Y=1|B_1=b_1^1, e_{maj}) = \frac{\mathbb{P}(Y=1, B_1=b_1^1, B_0=b_0^1)}{\mathbb{P}(Y=0, B_1=b_1^0, B_0=b_0^0)} \propto \frac{\mathbb{P}(Y=0| B_1=b_1^1)}{\mathbb{P}(Y=0| B_1=b_1^1)}
\end{align*}
As a result $\mathbb{P}(Y=1|B_1=b_1^1, e_{maj})=\mathbb{P}(Y=1|B_1=b_1^1)$. 
Then we have $\mathbb{P}(Y|B_1, e_{maj}) = \mathbb{P}(Y|B_1, e_{min})$, which means $B_1$ satisfies EIC. As $S$ is invariant, according to the proof of Theorem. 4.2 in~\ref{prf:4.2}, we have $f^*=\mathbb{P}(Y|S,B_1)$.
\end{proof}

\subsection{Proof for Theorem 5.1}
\label{prf:5.1}
We repeat the theorem as follows.
\begin{theorem}
If $\mathcal{G}$ satisfies $f^*_r(X) \perp\!\!\!\!\perp Y|g$, $\forall g \in \mathcal{G}$, where $f^*_r: \mathcal{X} \rightarrow \mathcal{Y}$ is spurious-only, i.e. $\sigma(X_{sp})$-measurable, and minimizes the prediction loss $\mathcal{L}^r_{ce} = \mathbb{E}[ \sum_{y}\mathbb{P}(Y=y|X)\log f_r(X)_y]$, the optimal model minimizing the following objective satisfies SFC.
\begin{align}
    \mathcal{L}(f) := & \sum_{g\in \mathcal{G}} \mathbb{E}[ w^g(Y) \mathcal{L}^g (f(X), Y)] \nonumber + \lambda \cdot \textit{penalty}(\{S_g(f)\}_{g \in \mathcal{G}}) \nonumber \\
    =: & \sum_{g\in \mathcal{G}} \tilde{\mathcal{R}}^g(f) + \lambda \cdot \textit{penalty}(\{S_g(f)\}_{g \in \mathcal{G}}) \nonumber
\end{align}
where $\tilde{\mathcal{R}}^g(f)$ is defined as
\begin{equation}
    \tilde{\mathcal{R}}^g(f) = \mathbb{E}_{x,y \in g}\: \omega^g(y) \mathcal{L}_{ce}(y, f(X)), \omega^g(y) := \mathbb{P}(Y=y)/\mathbb{P}(Y=y|g)
\end{equation}
\end{theorem}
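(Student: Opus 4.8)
The plan is to show that SCILL's two ingredients supply, on the \emph{reweighted} problem, exactly the two criteria of Section~\ref{sec:two} — the statistical split forces falsity exposure (indeed the stronger $X_{sp}\perp\!\!\!\!\perp Y\mid g$), while the reweighting forces label balance — and then to read SFC off the structure of the minimizer, or equivalently from Lemma~\ref{lemma:eic}. \textbf{Step 1 (strata hypothesis $\Rightarrow X_{sp}\perp\!\!\!\!\perp Y\mid g$).} I would first note that the minimizer of the spurious-only cross-entropy $\mathcal{L}^r_{ce}$ among functions of $X_{sp}$ is the Bayes posterior $f^*_r(X)=\mathbb{P}(Y\mid X_{sp})$, which is $X_{sp}$-measurable and calibrated, i.e.\ $\mathbb{P}(Y=y\mid f^*_r(X)=a)=a_y$. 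Since the strata are unions of level sets of $f^*_r$, the group map $G$ is $f^*_r$-measurable, so $\{f^*_r(X)=a\}\subseteq\{G=g\}$ and hence $\mathbb{P}(Y=y\mid f^*_r(X)=a,g)=a_y$. Feeding this into the hypothesis $f^*_r(X)\perp\!\!\!\!\perp Y\mid g$ forces $a_y=\mathbb{P}(Y=y\mid g)$ for $\mathbb{P}(\cdot\mid g)$-almost every $a$, i.e.\ $f^*_r$ is a.s.\ constant on each group, so $\mathbb{P}(Y\mid X_{sp})$ is constant on $g$, which is precisely $X_{sp}\perp\!\!\!\!\perp Y\mid g$. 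By the deduction recalled in Section~\ref{subsec:strata} this already yields falsity exposure, and together with the standing assumption $X_{inv}\perp\!\!\!\!\perp X_{sp}\mid Y$ it also gives $X_{sp}\perp\!\!\!\!\perp X_{inv}\mid g$.

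\textbf{Step 2 (reweighting $\Rightarrow$ label balance).} I would pass to the within-group reweighted law $\tilde{\mathbb{P}}_g\propto\omega^g(Y)\,\mathbb{P}(\cdot\mid g)$, under which $\tilde R^g$ is the ordinary cross-entropy risk. Because $\omega^g(y)=1/\mathbb{P}(Y=y\mid g)$, the reweighted label marginal $\tilde{\mathbb{P}}_g(Y=y)$ is uniform and hence identical across groups, so the label balance criterion holds for $\mathcal{G}$ under $\tilde R^g$. Reweighting only rescales the label marginal inside each group, leaving the within-group conditionals $\mathbb{P}(X\mid Y,g)$ unchanged; consequently $X_{sp}\perp\!\!\!\!\perp Y\mid g$, $X_{sp}\perp\!\!\!\!\perp X_{inv}\mid g$ and $X_{inv}\perp\!\!\!\!\perp X_{sp}\mid Y$ all survive the reweighting.

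\textbf{Step 3 (identify the minimizer, conclude SFC).} For a minimizer $f^\star$ of $\mathcal{L}$, I would exhibit the candidate $q(Y\mid X_{inv}):=\tilde{\mathbb{P}}_g(Y\mid X_{inv})$, which is well defined independently of $g$ (since $X_{inv}\perp\!\!\!\!\perp G\mid Y$ and the reweighted label marginal is common to all groups), depends on $X_{inv}$ alone, satisfies EIC as well as the label-conditional feature-invariance used by cMMD/PGI, and therefore has penalty $0$. Running the computation in the proof of Theorem~4.2 in the favourable direction, any EIC-satisfying predictor whose feature uses $X_{sp}$ must, by falsity exposure, collapse that component to something $Y$-uninformative; since cross-entropy is a strictly proper scoring rule, the unique risk-minimizer among EIC-satisfiers is $q(Y\mid X_{inv})$, so $f^\star=q(Y\mid X_{inv})$. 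Finally $X_{inv}\perp\!\!\!\!\perp X_{sp}\mid Y$ gives $\mathbb{P}(f^\star(X)\mid X_{sp}=b,Y)=\mathbb{P}(f^\star(X)\mid Y)$ for all $b$, i.e.\ SFC. Equivalently, Steps 1--2 can be packaged as verifying the hypotheses of Lemma~\ref{lemma:eic} for the reweighted problem — its condition (b) is invoked there only to make $\mathbb{P}(Y\mid X_{sp})$ constant on each group and to give $X_{sp}\perp\!\!\!\!\perp X_{inv}\mid g$, both established above — so that EIC induces SFC directly.

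\textbf{Expected main obstacle.} The delicate point is Step 3: excluding a risk/penalty trade-off, i.e.\ proving that the global minimizer is exactly the zero-penalty spurious-free predictor rather than an intermediate compromise, and doing so uniformly across the four penalties (IRM and REx constrain $\mathbb{P}(Y\mid f(X),g)$, whereas cMMD and PGI constrain $\mathbb{P}(f(X)\mid Y,g)$, which needs a parallel and in fact more direct argument). A secondary nuisance is the a.e./full-support bookkeeping behind the calibration step and the assumption that the statistical-split strata are genuinely $f^*_r$-measurable.
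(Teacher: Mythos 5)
Your plan is essentially the paper's own proof: show the $f^*_r$-strata make the spurious--label dependence degenerate within each group, note the reweighting equalizes label proportions across groups, and conclude via Lemma~\ref{lemma:eic} that EIC then forces SFC; moreover, the ``delicate point'' you flag in Step~3 is exactly what the paper supplies by expanding the reweighted risk $\tilde R^g(f)$ under $X_{inv}\perp\!\!\!\!\perp X_{sp}\mid Y$ and showing it reduces, up to group-dependent constants, to $\sum_y\sum_{x\in g}\mathbb{P}(X_{inv}(x)\mid y)\log f_y(x)$, so the ERM term is already indifferent to $X_{sp}$ within each stratum and no risk/penalty trade-off can arise. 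One caveat: Lemma~\ref{lemma:eic}(b) and the paper's computation both use the stronger fact that the \emph{joint} $\mathbb{P}(X_{sp}=b,Y=y)$ is constant over $b$ within each stratum (it is repeatedly pulled out of sums as $\mathbb{P}(X_{sp}=b_g)$ and compared across groups), whereas your Step~1 only delivers constancy of the conditional $\mathbb{P}(Y\mid X_{sp}=b)$, i.e.\ $X_{sp}\perp\!\!\!\!\perp Y\mid g$; to invoke the lemma exactly as stated you would also need the within-stratum uniformity of $\mathbb{P}(X_{sp}=b)$ that the paper asserts for the level sets of $f^*_r$.
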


\begin{proof}
For convenience, in the following we denote $w^g_y := \omega^g(y)$, $x(b,s):=r(X_{sp}=b, X_{inv}=s)$, $x=r(X_{sp}=x_b, X_{inv}=x_s)$. We use $p$ with lower-cased letters to denote the probability of the event that the corresponding random variable denoted by the upper-cased letter equals that value, e.g. $p(x, y) := \mathbb{P}(X=x, Y=y)$, $p(x|g)=\mathbb{P}(X=x|g)$. $f(\cdot)_y$ denotes the component of $f(\cdot)$ on the dimension corresponding to class $y$.

Denote $f_r$ as the reference model, which satisfies $f_r(x) = l(x_b)$, where $l$ is a classifier $l:\mathcal{B} \rightarrow \mathcal{Y}$. Denote $\theta_r$ as the parameter of $f_r$, the training loss of $f_r$ is defined as 
\begin{align*}
    \mathcal{L}_{ce}(f_r, \theta_r) &= \sum_x p(x) \sum_y p(y|x) \log f_{r}(x, \theta_r)_y \\
    &= \sum_{x_b, x_s} p(x_b, x_s) \sum_y p(y|x_b, x_s) \log l(x_b, \theta_r)_y \\
    &= \sum_{x_b, y} p(y, x_b) \log l(x_b, \theta_r)_y
\end{align*}
The above equation induces that for $f^*_r := f_r(x, \theta^*)$, where $\theta^* := \mathop{\arg\min} \mathcal{L}_{ce}(f_r, \theta)$, $f^*_r(x) = f^*_r(x')$ if and only if $p(y, x_b) = p(y, {x_b}')$.
If we define $g_{\mathbf{a}} := \{x|f_r^*(x)=\mathbf{a}\}$, we have $\mathbb{P}(X_{sp}=b|g_{\mathbf{a}}) = \mathbb{P}(X_{sp}=b'|g_{\mathbf{a}}), \forall b,b' \in \mathcal{B}$, and $p(y|g_{\mathbf{a}}) = a_y$.
We denote the set of strata of $f_r$ as $\mathcal{G}_{f_r}$. Now for any $g \in \mathcal{G}_{f_r}$,
\begin{align*}
    \tilde{\mathcal{R}}^g (f) &= \sum_x p(x|g) \sum_y w^g_y p(y|x, g) \log(f(x)_y) \\
    &= \sum_{b,s} \mathbb{P}(X_{inv}=s, X_{sp}=b|g) \sum_y w^g_y \mathbb{P}(y|X_{inv}=s, X_{sp}=b, g) \log(f(x(b,s))_y)
\end{align*}
By the conditional independence of $X_{inv}$ and $X_{sp}$ given $Y$,
\begin{equation}
    \mathbb{P}(X_{inv}=s, X_{sp}=b|Y=y) = \mathbb{P}(X_{inv}=s|Y=y) \mathbb{P}(X_{sp}=b|Y=y)
\end{equation}
As $e$ is a function of $X_{sp}, Y$, we have 
\begin{equation}
    \mathbb{P}(X_{inv}=s, X_{sp}=b, g|Y=y) = \mathbb{P}(X_{inv}=s|Y=y) \mathbb{P}(X_{sp}=b, g|Y=y)
\end{equation}
By the above, we have
\begin{align*}
    \tilde{\mathcal{R}}^g (f) &= \sum_{b,s} \mathbb{P}(X_{inv}=s, X_{sp}=b|g) \sum_y w^g_y \mathbb{P}(Y=y|X_{inv}=s, X_{sp}=b, g) \log(f(x(b,s))_y) \\
    &= \sum_{b,s} \mathbb{P}(X_{inv}=s, X_{sp}=b|g) \sum_y w^g_y \frac{\mathbb{P}(Y=y, X_{inv}=s, X_{sp}=b, g)}{\mathbb{P}(X_{inv}=s, X_{sp}=b, g)} \log(f(x(b,s))_y) \\
    &= \sum_{b,s} \frac{1}{\mathbb{P}(g)} \sum_y w^g_y \mathbb{P}(X_{inv}=s, X_{sp}=b, g|Y=y)p(y) \log(f(x(b,s))_y) \\
    &= \sum_{b,s} \frac{1}{\mathbb{P}(g)} \sum_y w^g_y \mathbb{P}(X_{inv}=s|Y=y) \mathbb{P}(X_{sp}=b, g, Y=y) \log(f(x(b,s))_y) \\
\end{align*}
By the definition of $g$, we have
\begin{equation}
    \mathbb{P}(Y=y|X_{sp}=b, g) = a_y
\end{equation}
So we have
\begin{align*}
    \tilde{\mathcal{R}}^g (f) &= \sum_{b,s} \frac{1}{\mathbb{P}(g)} \sum_y w^g_y a_y \mathbb{P}(X_{inv}=s|Y=y) \mathbb{P}(X_{sp}=b, g) \log(f(x(b,s))_y) \\
    &= \sum_{b} \mathbb{P}(X_{sp}=b | g) \sum_y \sum_s \mathbb{P}(X_{inv}=s|Y=y) \log(f(x(b,s))_y)
\end{align*}
As $\mathbb{P}(X_{sp}=b|g)$ is uniform, we have $\tilde{\mathcal{R}}^g (f) \propto \sum_y \sum_{b\in g,s} \mathbb{P}(X_{inv}=s|y) \log(f_y(x(b,s)))$. As a result
\begin{equation}
    \mathcal{L}_{ce}(f) = \sum_g C_g p(g) \sum_y \sum_{x\in g} \mathbb{P}(x_{inv}|Y=y) \log(f(x)_y)
\end{equation}
where $C_g$ is a constant depend on $g$. This equation indicates that the loss on $f(\cdot)_y$ only depends on $g$ and $\mathbb{P}(X_{inv}=s|Y=y)$. By imposing invariance constraints on $\mathbb{P}(Y|f, g)$, by Lemma~\ref{lemma:eic}, we have SFC is satisfied, which ends the proof.
\end{proof}

\end{document}